\documentclass{article}
\usepackage{ijcai26}
\usepackage{booktabs}
\usepackage{tabularx}
\usepackage{array}
\usepackage{times}
\usepackage{soul}
\usepackage{url}
\usepackage[hidelinks]{hyperref}
\usepackage[utf8]{inputenc}
\usepackage[small]{caption}
\usepackage{subcaption}
\usepackage{graphicx}
\usepackage{amsmath}
\usepackage{amsthm}
\usepackage{amssymb}
\usepackage{algorithm}
\usepackage{algorithmic}
\usepackage[switch]{lineno}
\usepackage{multirow}
\usepackage{makecell}

\newtheorem{theorem}{Theorem}

\title{The Bell Boundary of Shared Representations}

\author{
Yang Wu \and
Hanyu Zhao \and
Yazhou Zhang\thanks{Corresponding author.} \and
Yuexian Hou
\\
\affiliations
Tianjin University, China\\
\emails
\{hanyzhao, yangwu\_, yxhou, yzhou\_zhang\}@tju.edu.cn
}

\begin{document}

\maketitle

\begin{abstract}   

When does a shared representation become insufficient for jointly supporting multiple tasks? This question lies at the heart of multi-task learning models. Although task interference in shared representations has been widely observed, its onset, measurable form, and capacity boundary remain poorly understood. In this paper, we revisit this problem through the lens of Bell's theorem. We show that if four binary task outcomes can be jointly represented by a common shared representation, their correlations must satisfy the classical CHSH bound $|S|\leq2$. We introduce NCnet, a controlled multi-task model for studying how representational capacity affects task compatibility. Experiments reveal a clear capacity-dependent transition. Once nontrivial task fitting emerges, constrained capacity can yield $|S|>2$. As capacity further increases, these exceedances progressively disappear and $S$ returns to the bounded region. This systematic pattern provides evidence that limited representational capacity is an important source of the observed incompatibility. We observe similar transitions across CNNs, PLMs, and LLMs, suggesting that the phenomenon extends beyond the controlled NCnet setting. We refer to this capacity-dependent transition as the Bell Boundary, providing a measurable diagnostic for representation bottlenecks, task incompatibility, and capacity allocation in multi-task models.

\end{abstract}

\begin{figure*}[t]
    \centering
    \includegraphics[width=\linewidth]{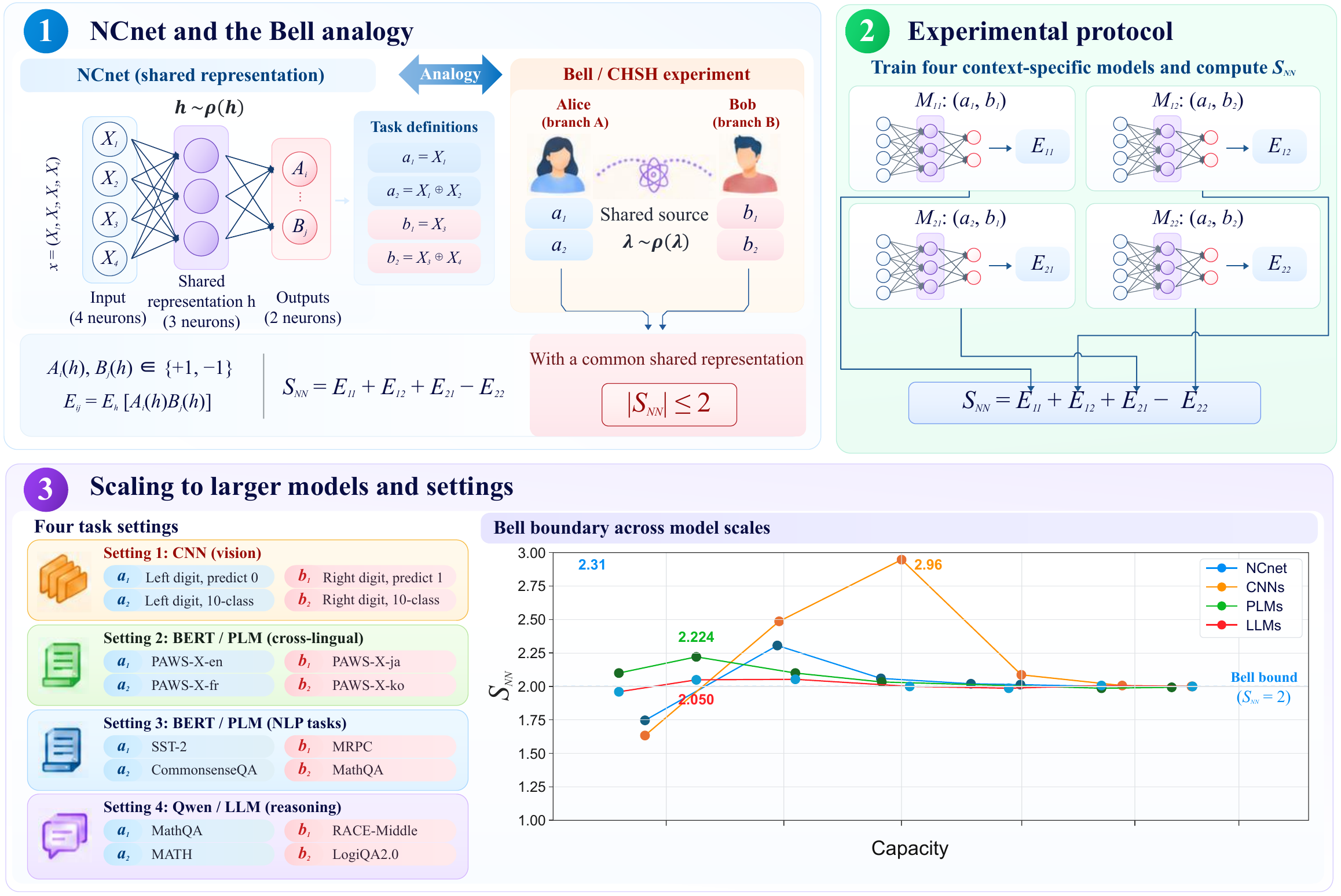}
    \caption{Overview of the Bell Boundary framework.
    \textbf{(1)} NCnet establishes a correspondence between shared-representation learning and the Bell/CHSH setting.
    \textbf{(2)} Four independently trained task-pair models provide the correlations used to compute $S_{\mathrm{NN}}$.
    \textbf{(3)} The analysis extends to CNNs, PLMs, and LLMs under four experimental settings. The bottom-right schematic illustrates CHSH exceedances in a critical capacity regime and their disappearance as capacity increases.}
    \label{fig:fig1}
\end{figure*}

\section{Introduction}
A key assumption underlying multi-task learning architecture is that multiple tasks can be supported by a shared latent representation~\cite{sener2018multi,ma2018modeling}. 
This assumption forms the foundation of multi-task foundation models, large language models (LLMs), and world models (WMs)~\cite{song2025large}, where a latent shared representation is expected to generalize across a broad range of downstream tasks, ranging from language understanding and generation~\cite{devlin2019bert,achiam2023gpt} to reasoning~\cite{song2025emotion}, planning~\cite{song2025large}, and action taking~\cite{song2025large}. 
Despite the remarkable success of this paradigm, a fundamental question remains unexplored:
\textit{When does a shared representation become insufficient for jointly supporting multiple tasks?}

Existing studies have long recognized that shared representations may induce task interference~\cite{sener2018multi,yu2020gradient}.
When representational capacity is limited, different tasks may impose conflicting optimization pressures on the shared parameters~\cite{standley2020tasks}, leading to gradient conflict~\cite{yu2020gradient}, negative transfer~\cite{ma2018modeling}, and unstable training~\cite{chen2018gradnorm}.
A large body of work has attempted to mitigate such effects through gradient balancing~\cite{chen2018gradnorm}, task routing~\cite{ma2018modeling}, modular architectures~\cite{ma2018modeling}, and mixture-of-experts models~\cite{ma2018modeling}. However, these methods mainly focus on reducing task conflict after it appears, instead of characterizing when and why a shared representation becomes insufficient. 
Three questions remain open: when does task conflict emerge, how can its strength be measured at the representation level, and whether there exists a principled boundary beyond which a single shared representation can no longer jointly support multiple tasks?

In this paper, we revisit this problem through the lens of Bell's theorem~\cite{Bell1964OnTE}. 
In the standard CHSH setting~\cite{clauser1969proposed}, one of the canonical Bell inequalities, two measurement branches $A$ and $B$ are considered. 
Branch $A$ has two possible settings $a_1$ and $a_2$, while branch $B$ has two possible settings $b_1$ and $b_2$.
Under a classical hidden-variable model, if the four potential binary outcomes can be jointly accounted for by a common hidden variable $\lambda \sim \rho \left ( \lambda  \right ) $, then their pairwise correlations satisfy the CHSH bound $|S|\leq 2$, where $S=E_{11}+E_{12}+E_{21}-E_{22}$ and $E_{ij}$ denotes the expectation of the product of the two binary outcomes for settings $(a_i,b_j)$, with $i,j\in\{1,2\}$. Thus, $|S|>2$ implies that the four outcomes cannot be embedded into a joint probability distribution~\cite{fine1982hidden} (i.e., no hidden-variable distribution \(\rho(\lambda)\) can jointly account for all four outcomes.).

This formulation naturally connects to shared-representation learning, as illustrated in Fig.~\ref{fig:fig1}.
In a multi-task neural network, a shared encoder maps an input $x\sim P(x)$ to a latent representation $h=f_\theta(x)$, inducing a representation distribution $h\sim\rho_\theta(h)$. 
We consider two task branches $A$ and $B$, each with two task settings $a_1$, $a_2$ and $b_1$, $b_2$. 
The corresponding binary task outcomes are denoted by $A_1(h)$, $A_2(h)$, $B_1(h)$, and $B_2(h)$, where each outcome indicates whether the prediction under the corresponding task setting is correct. 
We say that the four task contexts admit a common shared representation if their outcomes can be expressed as $A_i(h)$ and $B_j(h)$ on the same latent space $h\sim\rho(h)$, with each task outcome independent of its paired context.
Hereafter, we refer to such a context-independent common representation simply as a shared representation.
The central question is whether the four task-pair correlations can be jointly supported by a shared representation. We denote the CHSH statistic constructed from neural task-pair correlations by $S_{\mathrm{NN}}$. If such a representation exists, their correlations must satisfy the CHSH compatibility bound. Conversely, $|S_{\mathrm{NN}}|>2$ indicates that the four task-pair correlations cannot be jointly supported by a shared representation.

We interpret $S_{\mathrm{NN}}$ only when task performance exceeds chance level. Before this point, task outcomes are dominated by weak or noisy correlations, and $S_{\mathrm{NN}}$ mainly reflects underfitting rather than representation compatibility (see Sec.~\ref{sec:sec2.4}). Once nontrivial task fitting emerges, $|S_{\mathrm{NN}}|>2$ indicates that the learned task-pair correlations cannot be jointly supported by a single context-independent shared representation.

Hence, we introduce NCnet, a controlled multi-task architecture designed to align shared-representation learning with the CHSH setting. 
NCnet consists of a shared hidden layer and task-specific output heads. 
The two branches correspond to the two CHSH branches, and each branch contains two task settings. 
Each task output is converted into a binary correctness outcome, and the four pairwise task correlations are combined to compute the CHSH statistic $S_{\mathrm{NN}}$. By varying the number of hidden neurons, NCnet provides a controlled setting for studying how representational capacity affects task compatibility.

Experiments reveal a clear capacity-dependent transition. In low-capacity regimes, once the model achieves nontrivial task fitting, multiple tasks compete for the same representational dimensions, leading to strong gradient competition. 
The CHSH statistic can exceed the classical bound $|S_{\mathrm{NN}}|>2$
As capacity further increases, task competition weakens, task correlations become jointly compatible, and $S_{\mathrm{NN}}$ returns to the bounded region $|S_{\mathrm{NN}}|\le2$. We refer to this capacity-dependent transition as the \emph{Bell Boundary} of shared representations.

The significance of this boundary is not limited to the controlled NCnet setting. We observe consistent patterns in larger neural architectures, including CNN, PLMs and LLMs: homogeneous tasks are more easily supported by shared representations, whereas heterogeneous tasks are more likely to produce a boundary region with $S_{\mathrm{NN}}>2$. Increasing representational capacity consistently returns $S_{\mathrm{NN}}$ to the bounded region.
These results suggest that the Bell Boundary captures a general capacity transition in shared representations: before sufficient capacity is available, learned task correlations may not be jointly supported by a single shared representation; after sufficient capacity is reached, task interactions become compatible within the shared representation.

The Bell Boundary offers a quantitative tool for locating representation bottlenecks, measuring task incompatibility, and analyzing whether additional capacity is needed. More broadly, it provides theoretical guidance for the design of large multi-task foundation models.
Our contributions are summarized as follows:
\begin{itemize}
    \item We introduce the \emph{Bell Boundary} as a measurable capacity boundary for shared representations.
    \item We establish a formal correspondence between Bell inequalities and multi-task networks through NCnet.
    \item We reveal a capacity-dependent transition from gradient competition and $|S_{\mathrm{NN}}|>2$ to restored compatibility and $|S_{\mathrm{NN}}|\leq2$, and validate it from toy networks to PLMs and LLMs.
\end{itemize}

\section{Preliminaries}
\label{sec:prelim}


\subsection{Bell's Theorem and the CHSH Inequality}
\label{subsec:chsh}
Bell's theorem states that any theory satisfying local realism imposes strict constraints on the correlations among measurement outcomes. Here, \emph{realism} means that outcomes are determined by pre-existing variables, while \emph{locality} means that the outcome at one measurement branch does not depend on the measurement setting chosen at the other branch~\cite{Bell1964OnTE,brunner2014bell}. 
Bell inequalities therefore provide testable bounds on the correlations that can be generated by a local hidden-variable model. In this paper, we focus on the Clauser--Horne--Shimony--Holt (CHSH) inequality~\cite{clauser1969proposed}, because it provides a simple criterion for whether four binary outcomes can be jointly represented through a common hidden variable.

In the standard CHSH setting, there are two measurement branches, denoted by $A$ and $B$. Branch $A$ has two settings $a_1$ and $a_2$, and branch $B$ has two settings $b_1$ and $b_2$. 
Under a local hidden-variable model, all potential outcomes are determined by a common hidden variable $\lambda\sim\rho(\lambda)$. 
We write these outcomes as $A_i(\lambda)$, $B_j(\lambda)$, where $i,j\in\{1,2\}$. Locality requires that $A_i(\lambda)$ depends only on $a_i$ and $\lambda$, while $B_j(\lambda)$ depends only on $b_j$ and $\lambda$. Therefore, the four potential outcomes $A_1(\lambda)$, $A_2(\lambda)$, $B_1(\lambda)$, and $B_2(\lambda)$ are jointly defined and admit a common joint distribution $P(A_1,A_2,B_1,B_2)$~\cite{fine1982hidden}.
For each pair of settings, the correlation is the expectation of the product of the two outcomes:
\begin{equation}
\begin{aligned}
    E_{ij}=\int A_i(\lambda)B_j(\lambda)\rho(\lambda)d\lambda
    \end{aligned}
\end{equation}

The CHSH statistic is then defined as:
\begin{equation}
\begin{aligned}
S=E_{11}+E_{12}+E_{21}-E_{22}
 \end{aligned}
\end{equation}
Since all outcomes are binary, any local hidden-variable model satisfies the classical CHSH bound: $|S|\leq2$~\cite{clauser1969proposed}.
$|S|>2$ implies that the observed pairwise correlations cannot be represented by a single common joint probability distribution consistent with the local hidden-variable assumptions~\cite{fine1982hidden}.

\begin{table}[t]
\centering
\small
\caption{Correspondence between the CHSH setting and multi-task learning.}
\label{tab:tab1}
\renewcommand{\arraystretch}{1.15}
\begin{tabular}{p{0.45\linewidth} p{0.45\linewidth}}
\midrule
\textbf{CHSH setting} & \textbf{Multi-task learning} \\
\midrule
Hidden variable $\lambda\sim\rho(\lambda)$ & Shared representation $h=f_\theta(x)$, $h\sim\rho_\theta(h)$ \\
Measurement branches $A$ and $B$ & Task families $A$ and $B$ \\
Measurement settings $a_1$, $a_2$ and $b_1$, $b_2$ & Task settings $a_1$, $a_2$ and $b_1$, $b_2$ \\
Binary outcomes $A_i(\lambda)$, $B_j(\lambda)$ & Binary task outcomes $g_{\phi_{A_i}}\left ( h \right )$, $g_{\phi_{B_j}}\left ( h \right )$ \\
Correlation $E_{ij}$ & Task correlation $E_{ij}$ \\
CHSH statistic $S$ & Neural CHSH statistic $S_{\mathrm{NN}}$ \\
Classical bound $|S|\leq2$ & $|S_{\mathrm{NN}}|\leq2$ \\
Violation $|S|>2$ &  $|S_{\mathrm{NN}}|>2$ \\
\midrule
\end{tabular}
\end{table}

\subsection{Multi-Task Learning with Shared Representations}
\label{subsec:mtl}
Multi-task learning aims to train a single model to solve multiple tasks by sharing part of its parameters across tasks. A standard formulation consists of a shared encoder and a set of task-specific heads~\cite{sener2018multi,ma2018modeling}. Given an input $x\sim P(x)$, the shared encoder maps it into a hidden representation $h=f_\theta(x)$, where $\theta$ denotes the shared parameters. The hidden representation also induces a distribution over the latent space, denoted by $h\sim\rho_\theta(h)$. For each task $k$, a task-specific head $g_{\phi_k}$ produces the prediction:
\begin{equation}
\begin{aligned}
\hat{y}_k=g_{\phi_k}(h)
 \end{aligned}
\end{equation}
where $\phi_k$ denotes the task-specific parameters. 

The effectiveness of this formulation relies on the assumption that the shared representation $h$ contains sufficient information to support all task-specific heads. 
However, when representational capacity is limited, multiple tasks may compete for the same hidden dimensions~\cite{yu2020gradient,standley2020tasks,mao2024resource}. 

To quantify this competition, let $\mathbf{g}_k^{(t)}=\nabla_\theta\mathcal{L}_k^{(t)}$ and $\mathbf{g}_l^{(t)}=\nabla_\theta\mathcal{L}_l^{(t)}$ denote the gradients of the two task losses with respect to the shared trainable parameters $\theta$ at recorded training step $t$. For nonzero gradients, their gradient cosine similarity is
\begin{equation}
\label{eq:gradient_cosine}
c_{kl}^{(t)}=
\frac{\langle\mathbf{g}_k^{(t)},\mathbf{g}_l^{(t)}\rangle}
{\|\mathbf{g}_k^{(t)}\|_2\|\mathbf{g}_l^{(t)}\|_2}.
\end{equation}
Positive, zero, and negative values indicate aligned, orthogonal, and conflicting gradient directions, respectively~\cite{yu2020gradient}. The gradient conflict rate is the fraction of recorded steps with a negative gradient inner product:
\begin{equation}
\label{eq:gradient_conflict_rate}
R_{kl}=\frac{1}{T}\sum_{t=1}^{T}
\mathbf{1}\!\left[\langle\mathbf{g}_k^{(t)},\mathbf{g}_l^{(t)}\rangle<0\right],
\end{equation}
where $T$ is the number of recorded training steps and $\mathbf{1}[\cdot]$ is the indicator function. For nonzero gradients, the conflict criterion is equivalent to $c_{kl}^{(t)}<0$. Thus, cosine similarity measures directional alignment at each step, whereas the conflict rate measures how frequently conflict occurs over the recorded steps.



\subsection{A CHSH Formulation of Shared-Representation}
\label{subsec:bridge}
In the CHSH setting, the central question is whether four potential measurement outcomes can be jointly represented through a common hidden variable $\lambda$. In multi-task learning, the analogous question is whether multiple task outcomes can be jointly supported by a common hidden representation $h$. This correspondence is summarized in Table~\ref{tab:tab1}.

We consider two task families, denoted by $A$ and $B$. Each family contains two task settings, written as $a_1$, $a_2$ for family $A$ and $b_1$, $b_2$ for family $B$. Each task prediction is obtained from the shared representation $h=f_\theta(x)$ and converted into a binary correctness outcome, denoted by $A_i$ or $B_j$. Each outcome takes values in $\{-1,+1\}$, with $+1$ indicating a correct prediction and $-1$ an incorrect prediction.

\paragraph{Sufficient resources and independent convergence (SIC).}
The independent convergence (IC) assumption states that the task heads of a neural network can converge independently of one another, so that a multi-head network can be viewed as a collection of logically independent networks. Under IC, the converged task outcomes $A_i$ and $B_j$ depend only on the shared representation $h$ and their own task settings, and are independent of the joint training context $(i,j)$. When all four outcomes are jointly represented through $h\sim\rho_\theta(h)$, the resulting task-response model has the form of a local hidden-variable (LHV) model, with $h$ as the hidden variable.

A key observation is that IC does not generally hold. However, when the network has sufficient feature representation resources, it is reasonable to expect IC to hold. We therefore formulate the sufficient-resources-and-independent-convergence (SIC) assumption as $\mathrm{SIC}\equiv\mathrm{S}\land\mathrm{IC}$, where S denotes sufficient representation resources and IC denotes independent convergence.

\begin{table*}[t]
  \centering
  \caption{%
    NCnet results across hidden dimensions $d=1$--$20$
    (mean\,$\pm$\,std over 100 random seeds).
    $E_{ij}$ denotes task-pair correlation, $\mathrm{Acc}$ denotes per-head
    training accuracy, and $P_0$ is the fraction of runs with near-zero mean best-loss.
    \textbf{Bold} $S_{\mathrm{NN}}$ values significantly exceed $2$.%
  }
  \label{tab:hidden_dim_sweep_pairwise}
  \setlength{\tabcolsep}{5pt}
  \renewcommand{\arraystretch}{1.10}
  \resizebox{\linewidth}{!}{%
  \begin{tabular}{c ccc ccc ccc ccc c c}
    \toprule
    \multirow{2}{*}{$d$}
      & \multicolumn{3}{c}{$A_1B_1$}
      & \multicolumn{3}{c}{$A_1B_2$}
      & \multicolumn{3}{c}{$A_2B_1$}
      & \multicolumn{3}{c}{$A_2B_2$}
      & \multirow{2}{*}{$S_{\mathrm{NN}}$}
      & \multirow{2}{*}{$P_0$ (\%)} \\
    \cmidrule(lr){2-4}\cmidrule(lr){5-7}\cmidrule(lr){8-10}\cmidrule(lr){11-13}
      & $E$ & $\mathrm{Acc}(A_i)$ & $\mathrm{Acc}(B_j)$
      & $E$ & $\mathrm{Acc}(A_i)$ & $\mathrm{Acc}(B_j)$
      & $E$ & $\mathrm{Acc}(A_i)$ & $\mathrm{Acc}(B_j)$
      & $E$ & $\mathrm{Acc}(A_i)$ & $\mathrm{Acc}(B_j)$
      & & \\
    \midrule
    1
      & $\phantom{-}0.00${\,\small$\pm 0.00$} & $0.72${\,\small$\pm 0.25$} & $0.78${\,\small$\pm 0.25$}
      & $\phantom{-}0.02${\,\small$\pm 0.07$} & $0.97${\,\small$\pm 0.10$} & $0.52${\,\small$\pm 0.07$}
      & $\phantom{-}0.01${\,\small$\pm 0.06$} & $0.52${\,\small$\pm 0.06$} & $0.98${\,\small$\pm 0.09$}
      & $\phantom{-}0.06${\,\small$\pm 0.06$} & $0.62${\,\small$\pm 0.11$} & $0.64${\,\small$\pm 0.11$}
      & $-0.02${\,\small$\pm 0.12$}
      & $0$ \\
    2
      & $\phantom{-}1.00${\,\small$\pm 0.00$} & $1.00${\,\small$\pm 0.00$} & $1.00${\,\small$\pm 0.00$}
      & $\phantom{-}0.48${\,\small$\pm 0.11$} & $1.00${\,\small$\pm 0.00$} & $0.74${\,\small$\pm 0.05$}
      & $\phantom{-}0.47${\,\small$\pm 0.10$} & $0.74${\,\small$\pm 0.05$} & $1.00${\,\small$\pm 0.00$}
      & $\phantom{-}0.22${\,\small$\pm 0.20$} & $0.76${\,\small$\pm 0.17$} & $0.76${\,\small$\pm 0.17$}
      & $\phantom{-}1.74${\,\small$\pm 0.24$}
      & $0$ \\
    3
      & $\phantom{-}1.00${\,\small$\pm 0.00$} & $1.00${\,\small$\pm 0.00$} & $1.00${\,\small$\pm 0.00$}
      & $\phantom{-}0.91${\,\small$\pm 0.21$} & $1.00${\,\small$\pm 0.00$} & $0.95${\,\small$\pm 0.10$}
      & $\phantom{-}0.93${\,\small$\pm 0.17$} & $0.96${\,\small$\pm 0.09$} & $1.00${\,\small$\pm 0.00$}
      & $\phantom{-}0.53${\,\small$\pm 0.24$} & $0.88${\,\small$\pm 0.14$} & $0.86${\,\small$\pm 0.15$}
      & $\mathbf{\phantom{-}2.31}${\,\small$\pm 0.35$}
      & $0$ \\
    \cmidrule(lr){1-15}
    4
      & $\phantom{-}1.00${\,\small$\pm 0.00$} & $1.00${\,\small$\pm 0.00$} & $1.00${\,\small$\pm 0.00$}
      & $\phantom{-}0.99${\,\small$\pm 0.07$} & $1.00${\,\small$\pm 0.00$} & $0.99${\,\small$\pm 0.04$}
      & $\phantom{-}0.98${\,\small$\pm 0.10$} & $0.99${\,\small$\pm 0.05$} & $1.00${\,\small$\pm 0.00$}
      & $\phantom{-}0.93${\,\small$\pm 0.11$} & $0.98${\,\small$\pm 0.04$} & $0.98${\,\small$\pm 0.04$}
      & $\mathbf{\phantom{-}2.04}${\,\small$\pm 0.16$}
      & $53$ \\
    5
      & $\phantom{-}1.00${\,\small$\pm 0.00$} & $1.00${\,\small$\pm 0.00$} & $1.00${\,\small$\pm 0.00$}
      & $\phantom{-}1.00${\,\small$\pm 0.00$} & $1.00${\,\small$\pm 0.00$} & $1.00${\,\small$\pm 0.00$}
      & $\phantom{-}1.00${\,\small$\pm 0.00$} & $1.00${\,\small$\pm 0.00$} & $1.00${\,\small$\pm 0.00$}
      & $\phantom{-}0.99${\,\small$\pm 0.05$} & $1.00${\,\small$\pm 0.02$} & $1.00${\,\small$\pm 0.02$}
      & $\mathbf{\phantom{-}2.01}${\,\small$\pm 0.05$}
      & $95$ \\
    6
      & $\phantom{-}1.00${\,\small$\pm 0.00$} & $1.00${\,\small$\pm 0.00$} & $1.00${\,\small$\pm 0.00$}
      & $\phantom{-}1.00${\,\small$\pm 0.00$} & $1.00${\,\small$\pm 0.00$} & $1.00${\,\small$\pm 0.00$}
      & $\phantom{-}1.00${\,\small$\pm 0.00$} & $1.00${\,\small$\pm 0.00$} & $1.00${\,\small$\pm 0.00$}
      & $\phantom{-}1.00${\,\small$\pm 0.01$} & $1.00${\,\small$\pm 0.00$} & $1.00${\,\small$\pm 0.01$}
      & $\phantom{-}2.00${\,\small$\pm 0.01$}
      & $99$ \\
    \cmidrule(lr){1-15}
    $7$--$20$
      & $\phantom{-}1.00${\,\small$\pm 0.00$} & $1.00${\,\small$\pm 0.00$} & $1.00${\,\small$\pm 0.00$}
      & $\phantom{-}1.00${\,\small$\pm 0.00$} & $1.00${\,\small$\pm 0.00$} & $1.00${\,\small$\pm 0.00$}
      & $\phantom{-}1.00${\,\small$\pm 0.00$} & $1.00${\,\small$\pm 0.00$} & $1.00${\,\small$\pm 0.00$}
      & $\phantom{-}1.00${\,\small$\pm 0.00$} & $1.00${\,\small$\pm 0.00$} & $1.00${\,\small$\pm 0.00$}
      & $\phantom{-}2.00${\,\small$\pm 0.00$}
      & $\mathbf{100}$ \\
    \bottomrule
  \end{tabular}%
  }
\end{table*}

Under the task-specific decomposition assumed by SIC, let $\vartheta_{A_i}$ and $\vartheta_{B_j}$ denote the effective parameter blocks for the two active tasks. Their joint loss separates as
\begin{equation}
\begin{aligned}
\mathcal{L}
&=\mathcal{L}_{A_i}(\vartheta_{A_i})
+\mathcal{L}_{B_j}(\vartheta_{B_j}),\\
\nabla_{\vartheta_{A_i}}\mathcal{L}_{B_j}
&=0,\qquad
\nabla_{\vartheta_{B_j}}\mathcal{L}_{A_i}=0.
\end{aligned}
\end{equation}
Thus, each task's optimization depends only on its own parameter block, and the two minimization problems decouple. Different minimizers may nevertheless implement different functions; IC explicitly requires functional equivalence across converged runs. Consequently, replacing the partner of task $a_i$ leaves its converged prediction function unchanged, and the same holds for task $b_j$, almost surely under the shared-representation distribution $\rho_\theta(h)$. With the same labeling rule for each task across pairs, the correctness outcomes are also unchanged. We therefore write the effective outcomes as $A_i(h)$ and $B_j(h)$ without context indices. This requires equality of task functions, not of network parameters. Optimization independence does not require statistical independence of the outcomes, which may remain correlated through the shared representation.

Taking the shared representation $h\sim\rho_\theta(h)$ as the hidden variable, for each pair $(a_i,b_j)$, define
\begin{equation}
E_{ij}=\mathbb{E}_{h\sim\rho_\theta(h)}[A_i(h)B_j(h)],
\end{equation}
and $S_{\mathrm{NN}}=E_{11}+E_{12}+E_{21}-E_{22}$.

\begin{theorem}[Neural CHSH bound under SIC]
\label{thm:sic_chsh}
Suppose the four task-pair models satisfy SIC: the networks have sufficient feature representation resources (S), and their task heads can converge independently of one another, with the converged task functions independent of the joint training context (IC). If the four correctness outcomes can be jointly represented as $A_i(h)$ and $B_j(h)$ through a shared representation $h\sim\rho_\theta(h)$, with the same labeling rule for each task across pairs, then $|S_{\mathrm{NN}}|\leq2$.
\end{theorem}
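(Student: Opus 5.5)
The proof has two stages. First, SIC reduces the neural setting to a local hidden-variable model. Then the standard pointwise CHSH argument finishes it.

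\textbf{Reduction to a single probability space.} A priori, each of the four task-pair models $(a_i,b_j)$ produces context-indexed outcomes $A_i^{(ij)}$ and $B_j^{(ij)}$. By the loss decoupling displayed above, $\nabla_{\vartheta_{A_i}}\mathcal{L}_{B_j}=0$ and $\nabla_{\vartheta_{B_j}}\mathcal{L}_{A_i}=0$. Together with IC's requirement of functional equivalence across converged runs, this gives $A_i^{(i1)}(h)=A_i^{(i2)}(h)$ and $B_j^{(1j)}(h)=B_j^{(2j)}(h)$ for $\rho_\theta$-almost every $h$. Because the same labeling rule is used for each task across pairs, the correctness maps coincide as well. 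We may therefore drop the context indices and obtain four measurable functions $A_1,A_2,B_1,B_2:\mathcal{H}\to\{-1,+1\}$ on the single probability space $(\mathcal{H},\rho_\theta)$. Every $E_{ij}$ is then an expectation under this same measure, and linearity gives
\begin{equation*}
S_{\mathrm{NN}}=\mathbb{E}_{h\sim\rho_\theta}\bigl[A_1(h)\bigl(B_1(h)+B_2(h)\bigr)+A_2(h)\bigl(B_1(h)-B_2(h)\bigr)\bigr].
\end{equation*}
Discarding null sets where the functional equalities fail does not change any expectation.

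\textbf{Pointwise bound.} Fix $h$. Since $B_1(h),B_2(h)\in\{\pm1\}$, exactly one of $B_1+B_2$ and $B_1-B_2$ equals $0$, and the other equals $\pm2$. Since $|A_i(h)|=1$, the integrand has absolute value exactly $2$. Taking expectations and applying $|\mathbb{E}X|\le\mathbb{E}|X|$ yields $|S_{\mathrm{NN}}|\le 2$. Equivalently, one can invoke Fine's theorem: the four outcomes have the joint distribution given by the pushforward of $\rho_\theta$ under $(A_1,A_2,B_1,B_2)$, and the CHSH inequalities hold for any such joint distribution.

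\textbf{Main obstacle.} The arithmetic is routine. The delicate step is the reduction, namely justifying that $E_{11}$, $E_{12}$, $E_{21}$ and $E_{22}$, each estimated from a separately trained model, are expectations of products of the \emph{same} functions under the \emph{same} measure $\rho_\theta$. I would state explicitly that the hypothesis provides this: condition S ensures the shared representation and its distribution $\rho_\theta(h)$ are common to all four pairs, and IC ensures context-independence of the task functions. The proof is therefore conditional on SIC, and the bound can fail when SIC does not hold, for instance under limited capacity.
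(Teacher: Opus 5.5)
Your proposal is correct and follows essentially the same route as the paper. The paper also obtains context-independent outcomes on a single probability space, via the loss decoupling, IC's functional equivalence, and the common labeling rule, as argued in the text just before the theorem. It then shows pointwise that $|A_1(B_1+B_2)+A_2(B_1-B_2)|=2$ and concludes with $|\mathbb{E}[C(h)]|\le\mathbb{E}[|C(h)|]=2$; the only small difference is that the common measure $\rho_\theta$ comes directly from the theorem's joint-representation hypothesis, not from condition S as you suggest.
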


\begin{proof}
The four context-independent outcomes are functions of the shared representation $h$ on the same probability space. For $\rho_\theta$-almost every $h$,
\begin{equation}
\begin{aligned}
C(h)
&=A_1B_1+A_1B_2+A_2B_1-A_2B_2\\
&=A_1(B_1+B_2)+A_2(B_1-B_2),
\end{aligned}
\end{equation}
where the argument $h$ is omitted on the right-hand side. Since $B_1,B_2\in\{-1,+1\}$, exactly one of $B_1+B_2$ and $B_1-B_2$ is zero, while the other equals $\pm2$. Hence $|C(h)|=2$, and
\begin{equation}
|S_{\mathrm{NN}}|
=|\mathbb{E}_{h\sim\rho_\theta(h)}[C(h)]|
\leq\mathbb{E}_{h\sim\rho_\theta(h)}[|C(h)|]=2.
\end{equation}
\end{proof}

The observed violation of the neural CHSH inequality does not arise from quantum effects and involves no nonlocal correlations in the physical sense. It only rules out a joint probability distribution $P(A_1,A_2,B_1,B_2)$ consistent with the observed pairwise correlations and indicates the failure of the SIC assumption, namely $\neg\mathrm{SIC}\equiv\neg\mathrm{S}\lor\neg\mathrm{IC}$. To minimize the influence of random initialization and differences in optimization trajectories across individual runs, we repeat each experimental configuration with different random seeds and report the mean and standard deviation of the resulting statistics. Across our extensive experiments, we observe that the violation disappears as shared representational resources increase. This systematic capacity-dependent pattern further supports insufficient representational capacity as one of the important causes of the observed inequality violation.

The neural CHSH bound therefore provides a criterion for detecting SIC failure at a given representational capacity, although satisfying the bound does not establish SIC. We refer to the observed return from the CHSH-exceeding regime $|S_{\mathrm{NN}}|>2$ to the non-exceeding regime $|S_{\mathrm{NN}}|\leq2$ with increasing representational capacity as the \emph{Bell Boundary} of shared representations.

\subsection{When is $S_{\mathrm{NN}}$ Informative?}\label{sec:sec2.4}
Let $p_i$ and $q_j$ denote the accuracies of tasks $a_i$ and $b_j$, respectively.
Since $g_{\phi_{A_i}}(h)=+1$ for a correct prediction and $-1$ otherwise, we have $\mathbb{E}[g_{\phi_{A_i}}(h)]=p_i-(1-p_i)=2p_i-1$, and similarly $\mathbb{E}[g_{\phi_{B_j}}(h)]=2q_j-1$.

When a model performs at or below chance-level performance on both tasks, $p_i\approx q_j\approx0.5$, so $\mathbb{E}[g_{\phi_{A_i}}(h)]\approx0$ and $\mathbb{E}[g_{\phi_{B_j}}(h)]\approx0$.
If the prediction outcomes of the two tasks are also independent, then:
\begin{equation}
\begin{aligned}
E_{ij}&=\mathbb{E}_{h\sim\rho_\theta(h)}[g_{\phi_{A_i}}(h)g_{\phi_{B_j}}(h)]\\&\approx\mathbb{E}[g_{\phi_{A_i}}(h)]\mathbb{E}[g_{\phi_{B_j}}(h)]\approx0
 \end{aligned}
\end{equation}
and hence $S_{\mathrm{NN}}\approx0$. Therefore, in this case, $S_{\mathrm{NN}}<2$ should not be interpreted as evidence of representation compatibility. It only means that the task heads have not yet extracted task information from the shared representation

After the model exceeds chance-level performance, $S_{\mathrm{NN}}$ becomes meaningful as a representation-compatibility statistic. 
In this regime, $|S_{\mathrm{NN}}|>2$ can arise when the four task correlations become asymmetric. As capacity further increases, all tasks can be fitted more consistently. In the ideal case, $g_{\phi_{A_1}}(h)\approx g_{\phi_{A_2}}(h)\approx g_{\phi_{B_1}}(h)\approx g_{\phi_{B_2}}(h)\approx+1$, so $E_{11}\approx E_{12}\approx E_{21}\approx E_{22}\approx+1$ and $S_{\mathrm{NN}}\approx1+1+1-1=2$. Thus, the \textit{Bell Boundary} marks the point at which the model has moved beyond chance-level prediction, but shared representations of the given capacity are still insufficient to support all four task-pair combinations separately.

A subtlety deserves clarification. If a single feedforward network shared one representation $h$ among all four heads, then $A_1(h),A_2(h),B_1(h),B_2(h)$ would be deterministic functions of the same $h$ and would admit a joint distribution induced by $P(x)$; in this case $|S_{\mathrm{NN}}|\le2$ holds as an algebraic identity and cannot be exceeded. We therefore do not claim that a single network violates the bound. Instead, we adopt a pairwise training protocol: for each task pair $(a_i,b_j)$ we train an independent model $M_{ij}$ and estimate $E_{ij}$, then combine the four correlations into $S_{\mathrm{NN}}$ to test whether representations of the given capacity can support all four task-pair combinations separately. This protocol is detailed in Sec.~\ref{sec:ncnet_train}.

\begin{table*}[!t]
\centering
\caption{Task assignments for each experimental setting.}
\label{tab:task_combinations}
\begin{tabularx}{\textwidth}{@{} l *{4}{>{\centering\arraybackslash}X} @{}}
\toprule
Setting & $a_1$ & $a_2$ & $b_1$ & $b_2$ \\
\midrule
Setting 1              & Left digit, predict~0 & Left digit, 10-class & Right digit, predict~1 & Right digit, 10-class \\[3pt]
Setting 2           & PAWS-X-en             & PAWS-X-fr            & PAWS-X-ja              & PAWS-X-ko            \\[3pt]
Setting 3           & SST-2                 & CommonsenseQA        & MRPC                   & MathQA               \\
Setting 4           & MathQA                & MATH                 & RACE-Middle            & LogiQA2.0            \\
\bottomrule
\end{tabularx}
\end{table*}

\section{Related Work}
\subsection{Task Conflict in Multi-Task Learning}
Multi-task learning is built on the premise that related tasks can benefit from shared representations. In practice, however, parameter sharing can also induce negative transfer when task objectives are poorly aligned or when shared capacity is insufficient. Prior work has studied this issue from the perspective of multi-objective optimization and gradient interference. Sener and Koltun \cite{sener2018multi} formulate multi-task learning as a multi-objective optimization problem, where different task losses may impose competing optimization directions. Yu et al. \cite{yu2020gradient} further identify detrimental gradient interference as a key obstacle in multi-task optimization. These studies show that task conflict is not merely an implementation artifact, but a structural difficulty in shared-parameter learning.

Most existing methods address this difficulty by modifying the training dynamics or the sharing structure. GradNorm \cite{chen2018gradnorm} balances task training rates by adapting gradient magnitudes. PCGrad \cite{yu2020gradient} projects conflicting task gradients to reduce destructive interference. CAGrad \cite{liu2021conflict} constructs conflict-averse update directions that improve the average objective while controlling worst-case task degradation. Other approaches relax hard parameter sharing through routing, modularization, or expert-based architectures, such as multi-gate mixture-of-experts \cite{ma2018modeling}. These methods are effective for mitigating conflict, but they mainly operate after conflict has already appeared.

Our work is complementary to this line of research. Instead of proposing another conflict-mitigation algorithm, we ask a more diagnostic question: when does a shared representation become insufficient for jointly supporting multiple tasks? Existing methods typically measure conflict through gradients, losses, or downstream performance. We introduce a CHSH-based statistic to characterize task compatibility at the representation level, allowing us to identify a measurable capacity boundary of shared representations.

\subsection{Bell Inequalities and Machine Learning}
Bell inequalities have been studied in machine learning primarily in the context of quantum information and nonlocal correlations. Deng et al. \cite{deng2018machine} use machine learning to detect Bell nonlocality in quantum many-body systems. Ma and Yung \cite{ma2018transforming} combine Bell-inequality information with machine learning to classify quantum states and detect entanglement. Canabarro et al. \cite{canabarro2019machine} apply machine-learning methods to detect and quantify nonlocal correlations when direct Bell-inequality analysis becomes computationally difficult. Kriváchy et al. \cite{krivachy2020neural} introduce a neural-network oracle for nonlocality problems in quantum networks. In these studies, neural networks are mainly used as computational tools for analyzing quantum correlations or approximating classical and quantum strategies.

Different from them, we do not use neural networks to study quantum nonlocality, nor do we claim that classical neural networks exhibit quantum behavior. Instead, we use the CHSH inequality as a probabilistic criterion for studying task interference, representation bottlenecks, and capacity transitions in multi-task neural networks.

\begin{figure}[!t]
    \centering
    \includegraphics[width=\columnwidth]{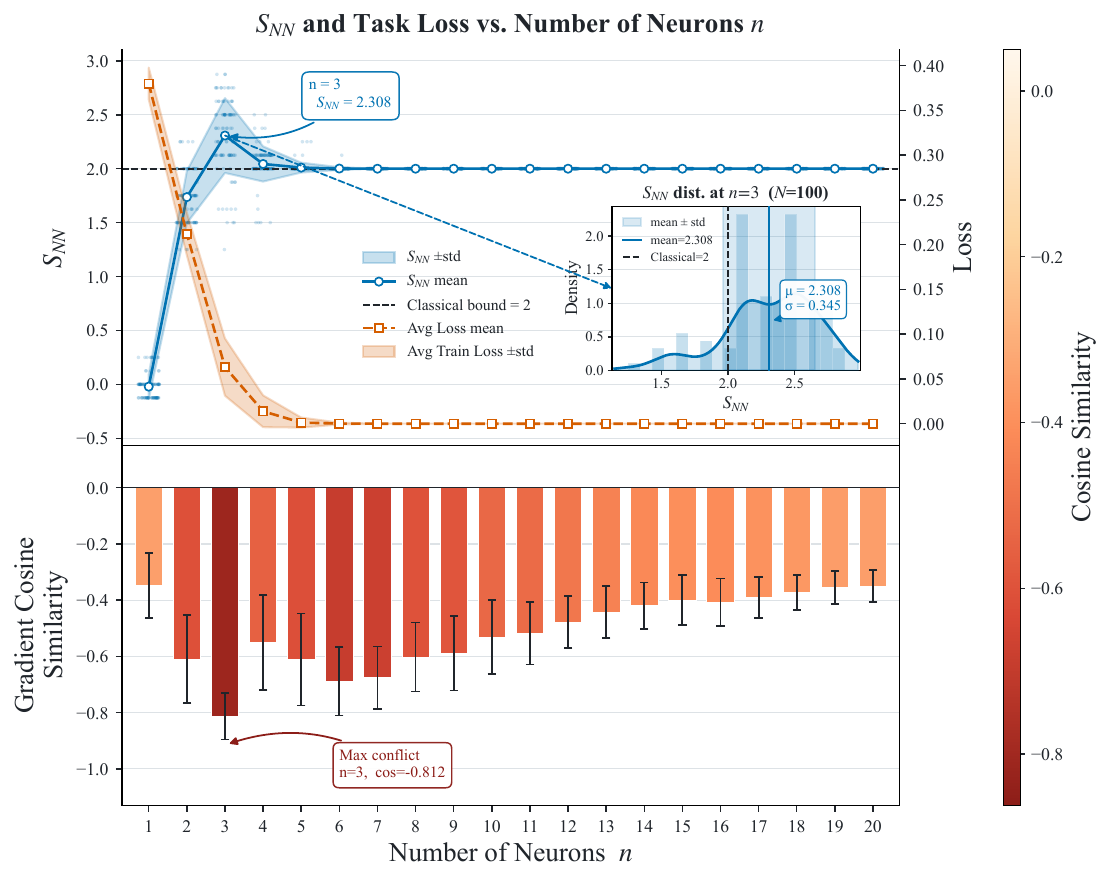}
    \caption{Capacity boundary and gradient conflict in NCnet. Top: $S_{\mathrm{NN}}$ (left axis) and average training loss (right axis) versus hidden width $n$. Points and bands show individual seeds and mean\,$\pm$\,std; the dashed line marks $S_{\mathrm{NN}}=2$, and the inset shows its distribution at $n=3$. Bottom: gradient cosine similarity averaged over the four $M_{ij}$ contexts and all training phases (green: alignment; red: conflict).}
    \label{img:ncnet-exp}
\end{figure}

\begin{table*}[t]
  \centering
  \caption{Train-phase $S_{\mathrm{NN}}$ (mean $\pm$ std over 5 random seeds) for the
    multi-task CNN across shared-encoder depth (rows) and channel width (columns).
    \textbf{Bold} cells are
    significantly greater than the classical Bell bound $S=2$}
  \label{tab:cnn_snn_train}
  \setlength{\tabcolsep}{5pt}
  \renewcommand{\arraystretch}{1.15}
  \resizebox{\linewidth}{!}{%
  \begin{tabular}{ccccccccc}
    \toprule
    \multirow{2}{*}{Depth} & \multicolumn{8}{c}{Channel width} \\
    \cmidrule(lr){2-9}
      & $1$ & $2$ & $4$ & $8$ & $16$ & $32$ & $64$ & $96$ \\
    \midrule
    1 & $-1.02${\,\small$\pm 0.57$} & $-1.82${\,\small$\pm 0.70$} & $-1.30${\,\small$\pm 0.04$} & $-1.23${\,\small$\pm 0.04$} & $-1.13${\,\small$\pm 0.03$} & $-0.95${\,\small$\pm 0.03$} & $-0.80${\,\small$\pm 0.02$} & $-0.76${\,\small$\pm 0.01$} \\
    2 & $-1.56${\,\small$\pm 0.50$} & $-1.22${\,\small$\pm 0.03$} & $-1.08${\,\small$\pm 0.04$} & $-0.85${\,\small$\pm 0.07$} & $-0.56${\,\small$\pm 0.01$} & $-0.32${\,\small$\pm 0.02$} & $-0.07${\,\small$\pm 0.05$} & $\phantom{-}0.07${\,\small$\pm 0.04$} \\
    3 & $-1.18${\,\small$\pm 0.03$} & $-1.01${\,\small$\pm 0.09$} & $-0.68${\,\small$\pm 0.05$} & $-0.13${\,\small$\pm 0.08$} & $\phantom{-}0.62${\,\small$\pm 0.06$} & $\phantom{-}1.41${\,\small$\pm 0.04$} & $\phantom{-}1.93${\,\small$\pm 0.04$} & $\mathbf{\phantom{-}2.26}${\,\small$\pm 0.07$} \\
    4 & $-0.89${\,\small$\pm 0.05$} & $-0.26${\,\small$\pm 0.11$} & $\phantom{-}0.68${\,\small$\pm 0.09$} & $\phantom{-}1.61${\,\small$\pm 0.04$} & $\mathbf{\phantom{-}2.46}${\,\small$\pm 0.07$} & $\mathbf{\phantom{-}2.96}${\,\small$\pm 0.04$} & $\mathbf{\phantom{-}2.06}${\,\small$\pm 0.02$} & $\phantom{-}2.00${\,\small$\pm 0.00$} \\
    5 & $-0.08${\,\small$\pm 0.09$} & $\phantom{-}1.41${\,\small$\pm 0.08$} & $\mathbf{\phantom{-}2.64}${\,\small$\pm 0.06$} & $\mathbf{\phantom{-}2.17}${\,\small$\pm 0.02$} & $\phantom{-}2.00${\,\small$\pm 0.00$} & $\phantom{-}2.00${\,\small$\pm 0.00$} & $\phantom{-}2.00${\,\small$\pm 0.00$} & $\phantom{-}2.00${\,\small$\pm 0.00$} \\
    6 & $\phantom{-}1.15${\,\small$\pm 0.18$} & $\phantom{-}2.01${\,\small$\pm 0.08$} & $\mathbf{\phantom{-}2.01}${\,\small$\pm 0.01$} & $\phantom{-}2.00${\,\small$\pm 0.00$} & $\phantom{-}2.00${\,\small$\pm 0.00$} & $\phantom{-}2.00${\,\small$\pm 0.00$} & $\phantom{-}2.00${\,\small$\pm 0.00$} & $\phantom{-}2.00${\,\small$\pm 0.00$} \\
    7 & $\phantom{-}1.77${\,\small$\pm 0.11$} & $\mathbf{\phantom{-}2.02}${\,\small$\pm 0.02$} & $\phantom{-}2.00${\,\small$\pm 0.00$} & $\phantom{-}2.00${\,\small$\pm 0.00$} & $\phantom{-}2.00${\,\small$\pm 0.00$} & $\phantom{-}2.00${\,\small$\pm 0.00$} & $\phantom{-}2.00${\,\small$\pm 0.00$} & $\phantom{-}2.00${\,\small$\pm 0.00$} \\
    \bottomrule
  \end{tabular}%
  }
\end{table*}

\section{NCnet: A Toy Multi-task Model}
\subsection{Task Construction}
We construct a multi-task toy model whose variables can be directly mapped to the CHSH framework. Each input sample consists of four binary features, $x=(X_1,X_2,X_3,X_4)$, where $X_k\in{0,1}$. The first two features define two tasks on branch $A$, and the last two features define two tasks on branch $B$:
\begin{equation}
\begin{aligned}
A:
\begin{cases}
a_1=X_1 \\
a_2=X_1\oplus X_2
\end{cases}
\qquad
B:
\begin{cases}
b_1=X_3\\
b_2=X_3\oplus X_4
\end{cases}
 \end{aligned}
\end{equation}

Here, $a_1$ and $b_1$ are identity tasks, and $a_2$ and $b_2$ are XOR tasks. 
This design creates a simple but nontrivial capacity requirement: identity tasks can be represented with relatively simple features, whereas XOR tasks require nonlinear feature composition. By varying the number of hidden neurons, we can directly control whether the shared representation has sufficient capacity to support all four tasks.

The four tasks correspond to the four CHSH measurement settings. Branches $A$ and $B$ correspond to the two CHSH measurement branches, while $(a_1,a_2)$ and $(b_1,b_2)$ correspond to the two settings on each branch. For each context, NCnet is trained and evaluated on the corresponding two-task problem. The purpose is to estimate the four pairwise task correlations required by the CHSH statistic and then test whether these correlations can be jointly explained by a common shared representation.

\subsection{NCnet Architecture}
For each task-pair context $(a_i,b_j)$, NCnet is a minimal feedforward multi-task network with one shared hidden layer and two task-specific output heads. Given an input $x$, the shared hidden representation is computed as $h=\sigma(Wx+c)$, where $W$ and $c$ are shared by the two active tasks and $\sigma(\cdot)$ is the ReLU activation. Here, $h$ refers to the representation learned by the model $M_{ij}$ for the active context. The two task heads produce predictions
\begin{equation}
\begin{aligned}
\hat{a}_i=g_{a_i}(h),\quad
\hat{b}_j=g_{b_j}(h).
 \end{aligned}
\end{equation}
The two active heads are jointly trained on the shared representation within each $M_{ij}$. The four models are trained separately, so their learned representations and task heads may differ across contexts. Their pairwise correlations are combined to test compatibility with a context-independent joint representation, as described in Sec.~\ref{sec:sec2.4}.

The key capacity parameter of NCnet is the hidden width $n$, i.e., the number of neurons in the shared hidden layer. We vary $n$ to examine how representational capacity affects task compatibility. When $n$ is small, the two active tasks must compete for a limited number of hidden dimensions. When $n$ increases, the shared layer can allocate more representational degrees of freedom to different task requirements.

\subsection{Task Outcomes and CHSH Statistic}
To compute the CHSH statistic, each task prediction is converted into a binary outcome. For task $a_i$, the outcome is defined as:
\begin{equation}
\begin{aligned}
A_i(h)=
\begin{cases}
+1, & \text{if } \hat{a}_i \text{ is correct}\\
-1, & \text{otherwise}
\end{cases}
 \end{aligned}
\end{equation}
and for task $b_j$,
\begin{equation}
\begin{aligned}
B_j(h)=
\begin{cases}
+1, & \text{if } \hat{b}_j \text{ is correct} \\
-1, & \text{otherwise}
\end{cases}
 \end{aligned}
\end{equation}
For each task pair $(a_i,b_j)$, we define the task correlation as the expectation of the product of the two binary outcomes:
\begin{equation}
\begin{aligned}
E_{ij}=\mathbb{E}_{h\sim\rho_\theta(h)}[A_i(h)B_j(h)].
 \end{aligned}
\end{equation}
In experiments, this expectation is estimated by averaging over the test set:

\begin{equation}
\begin{aligned}
E_{ij}=\frac{1}{N}\sum_{m=1}^{N} A_i^{(m)}(h)B_j^{(m)}(h).
 \end{aligned}
\end{equation}

The neural CHSH statistic is then computed as:
\begin{equation}
\begin{aligned}
S_{\mathrm{NN}}=
E_{11}+E_{12}+E_{21}-E_{22}
 \end{aligned}
\end{equation}

\subsection{Training Protocol}
\label{sec:ncnet_train}
NCnet is trained under four task-pair contexts: $(a_1,b_1)$, $(a_1,b_2)$, $(a_2,b_1)$, and $(a_2,b_2)$. For each context $(a_i,b_j)$, we train a separate NCnet instance $M_{ij}$. Each $M_{ij}$ contains a shared hidden layer and two task-specific output heads, $g_{\phi_{A_i}}$ and $g_{\phi_{B_j}}$, for the active tasks $a_i$ and $b_j$. The training objective is the sum of the two active task losses. All four models use the same architecture, hidden width, input distribution, optimizer, training schedule, and evaluation protocol, and the only difference is the active task pair.

The hidden width is varied from $n=1$ to $n=20$. For each $n$, every model $M_{ij}$ is independently trained and evaluated 100 times with different random seeds. The outcomes produced by $g_{\phi_{A_i}}(h)$ and $g_{\phi_{B_j}}(h)$ are used to estimate $A_i(h)$, $B_j(h)$ and $E_{ij}$. The four correlations are then combined to compute $S_{\mathrm{NN}}$. We report the mean and standard deviation of $S_{\mathrm{NN}}$, the pairwise correlations $E_{ij}$, and the training loss over 100 runs.

This setup is intentionally minimal. By controlling the number of hidden neurons, NCnet provides a direct testbed for observing when a shared representation becomes insufficient for jointly supporting multiple tasks.

\subsection{Results and Analysis}
Figure~\ref{img:ncnet-exp} and Table~\ref{tab:hidden_dim_sweep_pairwise} reveal a capacity-dependent transition. At $n=1,2$, $S_{\mathrm{NN}}<2$ alone does not establish compatibility (Sec.~\ref{sec:sec2.4}). At $n=3$, all tasks exceed chance-level accuracy, yet $S_{\mathrm{NN}}$ peaks at $2.31\pm0.35$. The three correlations $E_{11},E_{12},E_{21}$ are $0.91$--$1.00$, whereas $E_{22}=0.53$, reflecting the difficulty of jointly fitting two XOR tasks. This exceedance rules out a context-independent joint representation of the observed correlations and indicates SIC failure.

As capacity increases, fitting becomes more balanced and exceedances diminish. At $n=4$, $E_{22}$ rises to $0.93$, training loss drops sharply, and $53\%$ of runs attain near-zero mean best-loss. The mean $S_{\mathrm{NN}}$ falls to $2.04\pm0.16$ at $n=4$ and $2.01\pm0.05$ at $n=5$, both still significantly above $2$. At $n=7$--$20$, all runs attain near-zero mean best-loss, with $E_{ij}=1$ and $S_{\mathrm{NN}}=2$. This progression identifies the Bell Boundary and supports limited shared capacity as an important source of the observed incompatibility.

The lower panel of Figure~\ref{img:ncnet-exp} provides complementary evidence of task competition. Mean gradient cosine similarity is most negative at $n=3$ (approximately $-0.812$), coinciding with the CHSH peak. With further widening, it generally becomes less negative, although not monotonically, and remains negative even where $S_{\mathrm{NN}}\approx2$. Thus, the capacity-dependent reduction in CHSH exceedances accompanies weaker average gradient conflict, without implying that conflict disappears or that SIC is established.

\section{Scaling the Bell Boundary to Large Foundation Models}
We examine whether the same capacity-dependent pattern appears in larger neural networks, including CNN, PLMs, and LLMs. 

\begin{figure}[!t]
    \centering
    \includegraphics[width=\columnwidth]{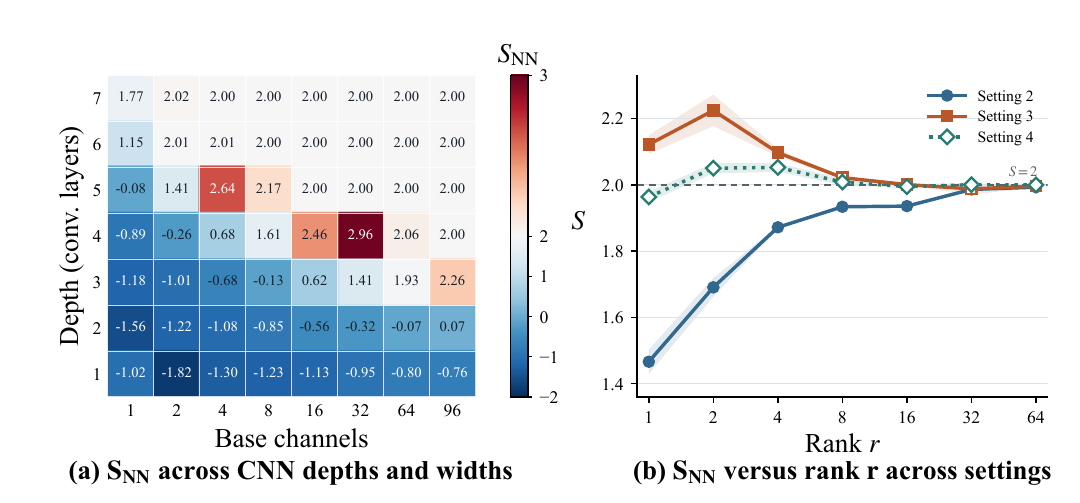}
    \caption{$S_{\mathrm{NN}}$ versus Trainable Parameter Count Across Settings. (a) Mean $S_{\mathrm{NN}}$ across five seeds for CNNs of varying depth and base channel width; values are shown in each heatmap cell. (b) Mean $S_{\mathrm{NN}}$ versus LoRA rank $r$ for Settings 2, 3, and 4. Shaded bands show one standard deviation across three seeds, the dashed line marks $S_{\mathrm{NN}}=2$.}
    \label{fig:snn_capacity_comparison}
\end{figure}
\begin{figure}[!t]
    \centering
    \includegraphics[width=\columnwidth]{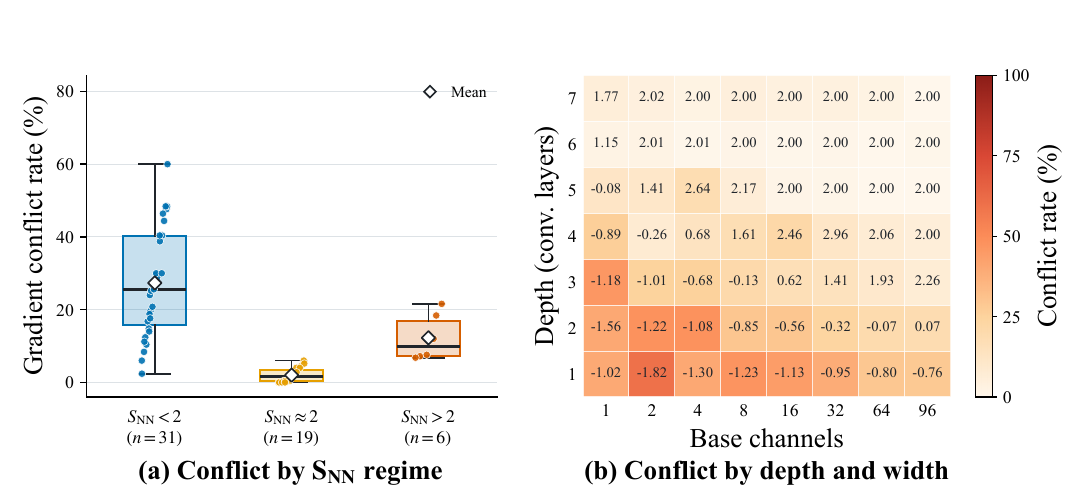}
    \caption{Gradient conflict rates for CNN task pair $A_2B_2$. (a) Rates grouped by mean $S_{\mathrm{NN}}$: $<1.97$, $1.97$--$2.03$, and $>2.03$. Points denote architectures, boxes show medians and interquartile ranges, and diamonds mark group means. (b) Rates by shared-encoder depth and base channel width; cell labels show final-epoch mean $S_{\mathrm{NN}}$.}
    \label{fig:cnn_gradient_conflict}
\end{figure}

\subsection{Experimental Setup}

\textbf{Models.}
We evaluate three classes of models with increasing scale and representational complexity. First, we use a multi-task CNN for vision experiments, consisting of a shared convolutional encoder and task-specific classification heads. 
Second, we use a BERT-based multi-task model~\cite{devlin2019bert}, where BERT serves as the shared text encoder and each task has an independent classification head. 
Third, we extend the same protocol to decoder-only LLMs (i.e., Qwen3-8B~\cite{yang2025qwen3}), where LoRA adapters~\cite{hu2022lora} are inserted into attention projection layers and trained on instruction-style classification and reasoning tasks.

\textbf{Tasks and datasets.}
For vision experiments (Setting 1), we construct a two-digit image dataset based on MNIST~\cite{lecun1998gradient}. Each image contains two digits, one on the left and one on the right. The tasks combine binary digit detection and 10-class digit recognition on the two sides.

For language experiments, Setting 2 uses multilingual paraphrase identification on PAWS-X~\cite{yang2019paws}, representing a relatively homogeneous multi-task scenario. Setting 3 combines sentiment classification~\cite{socher2013recursive}, paraphrase identification~\cite{wang-etal-2018-glue,dolan2005automatically}, commonsense reasoning~\cite{Talmor2019CommonsenseQAAQ}, and mathematical reasoning~\cite{Amini2019MathQATI}, forming a heterogeneous scenario with different objectives and reasoning requirements. Setting 4 combines mathematical reasoning on MathQA~\cite{Amini2019MathQATI} and MATH~\cite{hendrycks2021math}, reading comprehension on RACE-Middle~\cite{lai2017race}, and logical reasoning on LogiQA2.0~\cite{liu2023logiqa2} to examine shared-representation constraints in a decoder-only LLM. Detailed task assignments are summarized in Table~\ref{tab:task_combinations}.

\begin{table*}[!htb]
\centering
\setlength{\tabcolsep}{8pt}
\caption{Pairwise task correlations $E_{ij}$ and CHSH statistic
$S_{\mathrm{NN}}$ across LoRA ranks $r$ for (a) Setting 2,
(b) Setting 3, and (c) Setting 4. Values are mean$\pm$std over
three random seeds for Settings 2, 3, and 4.
\textbf{Bold} $S_{\mathrm{NN}}$ values significantly exceed $2$.}
\label{tab:language_combined}
\label{tab:exp1_homogeneous}
\label{tab:exp2_heterogeneous}
\label{tab:qwen_main_results}
\begin{tabular}{cccccc}
\toprule
\textbf{$r$} & $E_{11}$ & $E_{12}$ & $E_{21}$ & $E_{22}$ & \textbf{$S_{\mathrm{NN}}$} \\
\midrule
\multicolumn{6}{l}{\textbf{(a) Setting 2}} \\
\addlinespace[2pt]
1 & 0.743$\pm$0.021 & 0.695$\pm$0.025 & 0.689$\pm$0.027 & 0.661$\pm$0.025 & 1.466$\pm$0.037 \\
2 & 0.839$\pm$0.017 & 0.836$\pm$0.008 & 0.838$\pm$0.018 & 0.822$\pm$0.018 & 1.691$\pm$0.028 \\
4 & 0.936$\pm$0.002 & 0.915$\pm$0.005 & 0.918$\pm$0.013 & 0.896$\pm$0.016 & 1.872$\pm$0.003 \\
8 & 0.968$\pm$0.004 & 0.956$\pm$0.007 & 0.955$\pm$0.008 & 0.946$\pm$0.002 & 1.934$\pm$0.004 \\
16 & 0.969$\pm$0.004 & 0.961$\pm$0.003 & 0.957$\pm$0.001 & 0.951$\pm$0.002 & 1.936$\pm$0.003 \\
32 & 0.992$\pm$0.006 & 0.992$\pm$0.007 & 0.989$\pm$0.003 & 0.988$\pm$0.005 & 1.986$\pm$0.003 \\
64 & 0.996$\pm$0.002 & 0.996$\pm$0.003 & 0.995$\pm$0.001 & 0.994$\pm$0.002 & 1.993$\pm$0.001 \\
\midrule
\multicolumn{6}{l}{\textbf{(b) Setting 3}} \\
\addlinespace[2pt]
1 & 0.993$\pm$0.001 & 0.262$\pm$0.049 & 0.792$\pm$0.012 & -0.075$\pm$0.013 & $\mathbf{2.122}\pm$0.028 \\
2 & 0.998$\pm$0.000 & 0.653$\pm$0.048 & 0.945$\pm$0.009 & 0.372$\pm$0.035 & $\mathbf{2.224}\pm$0.047 \\
4 & 0.999$\pm$0.001 & 0.908$\pm$0.008 & 0.984$\pm$0.003 & 0.794$\pm$0.017 & $\mathbf{2.097}\pm$0.011 \\
8 & 1.000$\pm$0.001 & 0.973$\pm$0.002 & 0.993$\pm$0.002 & 0.944$\pm$0.005 & $\mathbf{2.021}\pm$0.004 \\
16 & 1.000$\pm$0.000 & 0.989$\pm$0.003 & 0.993$\pm$0.003 & 0.981$\pm$0.004 & 2.001$\pm$0.002 \\
32 & 1.000$\pm$0.000 & 0.978$\pm$0.015 & 0.996$\pm$0.000 & 0.986$\pm$0.007 & 1.988$\pm$0.018 \\
64 & 1.000$\pm$0.000 & 0.992$\pm$0.001 & 0.994$\pm$0.002 & 0.990$\pm$0.004 & 1.996$\pm$0.002 \\
\midrule
\multicolumn{6}{l}{\textbf{(c) Setting 4}} \\
\addlinespace[2pt]
1 & 0.943$\pm$0.004 & 0.902$\pm$0.005 & 0.352$\pm$0.010 & 0.234$\pm$0.006 & 1.963$\pm$0.015 \\
2 & 0.989$\pm$0.003 & 0.976$\pm$0.003 & 0.664$\pm$0.008 & 0.579$\pm$0.007 & $\mathbf{2.050}\pm$0.016 \\
4 & 0.991$\pm$0.003 & 0.988$\pm$0.003 & 0.830$\pm$0.007 & 0.756$\pm$0.006 & $\mathbf{2.052}\pm$0.013 \\
8 & 0.998$\pm$0.002 & 0.995$\pm$0.002 & 0.928$\pm$0.005 & 0.912$\pm$0.005 & 2.008$\pm$0.010 \\
16 & 1.000$\pm$0.001 & 0.997$\pm$0.002 & 0.981$\pm$0.004 & 0.983$\pm$0.002 & 1.994$\pm$0.004 \\
32 & 1.000$\pm$0.001 & 1.000$\pm$0.001 & 0.999$\pm$0.001 & 0.998$\pm$0.001 & 2.000$\pm$0.002 \\
64 & 1.000$\pm$0.000 & 0.999$\pm$0.000 & 1.000$\pm$0.000 & 0.999$\pm$0.001 & 2.000$\pm$0.001 \\
\bottomrule
\end{tabular}
\end{table*}

\textbf{Capacity control.}
In the vision experiments, we vary the capacity of the shared convolutional encoder by adjusting its depth through the number of convolutional layers and its width through the number of channels. Specifically, we vary the channel width over $\{1,2,4,8,16,32,64,96\}$ and the depth from $1$ to $7$, thereby changing the number of trainable parameters available for shared feature extraction. In both the BERT and Qwen3-8B experiments, we vary the LoRA rank $r$ over $\{1,2,4,8,16,32,64\}$, which controls the effective number of trainable adaptation parameters. 

\textbf{Training and evaluation.}
All models are trained in a multi-task setting using the sum of task losses. All tasks are formulated as classification tasks and trained using cross-entropy loss. CNN models are trained for 50 epochs, with each experiment repeated 5 times using different random seeds. BERT models are trained for 100 epochs in Setting 2 and 200 epochs in Setting 3, while Qwen3-8B is trained for 10 epochs in Setting 4. Each BERT and Qwen3-8B experiment is repeated 3 times using different random seeds. 
For Settings 2, 3, and 4, examples are randomly paired across tasks. We independently shuffle the training datasets for $a_1$, $a_2$, $b_1$, and $b_2$, then form each task pair $(a_i,b_j)$ by drawing examples sequentially from the corresponding shuffled datasets. When a dataset has insufficient examples, repeated sampling supplies the additional examples needed for pairing. The product of the two binary correctness outcomes is computed for each sampled pair and averaged to obtain $E_{ij}$. We report the mean and standard deviation of task accuracy $Acc.$, pairwise task correlations $E_{ij} $, and the CHSH statistic $S_{\mathrm{NN}}$. Full training details are provided in Appendix~\ref{app:training_details}. 

\subsection{Experimental Results}

\textbf{Vision (CNN).}
Table~\ref{tab:cnn_snn_train} shows that $S_{\mathrm{NN}}$ remains below 2 at low CNN capacities, significantly exceeds 2 at intermediate capacities, and returns to approximately 2 as capacity further increases. The intermediate-capacity exceedance thus delineates a clear Bell Boundary region. This Bell Boundary region shifts toward smaller channel widths as encoder depth increases. Figure~\ref{fig:cnn_gradient_conflict} shows that gradient conflict in $A_2B_2$ is lowest for architectures near $S_{\mathrm{NN}}=2$, while both the below-bound and above-bound groups exhibit stronger conflict. Together, these results associate the capacity-dependent boundary with persistent task competition; $S_{\mathrm{NN}}<2$ alone does not establish compatibility, while exceedances indicate incompatibility with a context-independent joint probability model of task outcomes.

\textbf{Language (PLM/LLM).}
Table~\ref{tab:language_combined} and Figure~\ref{fig:snn_capacity_comparison}(b) show contrasting capacity-dependent patterns. BERT's homogeneous tasks (Setting 2) approach $S_{\mathrm{NN}}=2$ from below. Their shared paraphrase objective offers a plausible explanation for the four correlations improving together and remaining similar: when $E_{ij}\approx c$, $S_{\mathrm{NN}}\approx2c\leq2$. Homogeneity alone, however, does not guarantee this bound. BERT's heterogeneous tasks (Setting 3) exceed the bound at limited adaptation capacity. Qwen3-8B (Setting 4) exhibits a smaller exceedance at intermediate LoRA ranks, extending the Bell Boundary observation to a large decoder-only model. In both heterogeneous settings, exceedances diminish as adaptation capacity increases and pairwise correlations become more balanced.

Figure~\ref{fig:bert_qwen_gradient_comparison} shows that gradient conflict generally decreases with increasing LoRA rank in Qwen but varies across task pairs in BERT. BERT's $A_2B_2$ conflict rate tends to increase with $r$, which can be interpreted in light of SIC. Under the task-specific decomposition assumed by SIC, the two tasks occupy independent, orthogonal effective parameter subspaces, so their nonzero gradients have zero inner product and hence zero cosine similarity. The narrowing central spread and the center shifting toward zero in Figure~\ref{fig:bert_qwen_gradient_comparison}(b) are consistent with increasing capacity allowing the tasks to approach this orthogonal configuration. Exact orthogonality yields no gradient conflict; during finite training, however, small deviations from orthogonality can produce slightly negative cosine values. As the distribution shifts toward zero from the positive side, such values can become more frequent even as their typical magnitudes decrease. Since the conflict rate records only whether the gradient inner product is negative, its increase does not by itself establish stronger task competition.

Together, the CNN, BERT, and Qwen results suggest that the Bell Boundary depends on both available capacity and task composition, and extends across vision and language models. Increasing shared capacity reduces CHSH exceedances and can alleviate gradient conflict. These findings suggest that limited shared representational capacity may sustain gradient competition between tasks, which could contribute to the observed violations of the neural CHSH bound ($|S_{\mathrm{NN}}|>2$). The recurrence of this capacity-dependent pattern across distinct architectures, modalities, and task settings provides evidence that the Bell Boundary reflects a general constraint on shared representations in multi-task learning.

\begin{figure*}[!tp]
    \centering
    \includegraphics[width=0.8\textwidth]{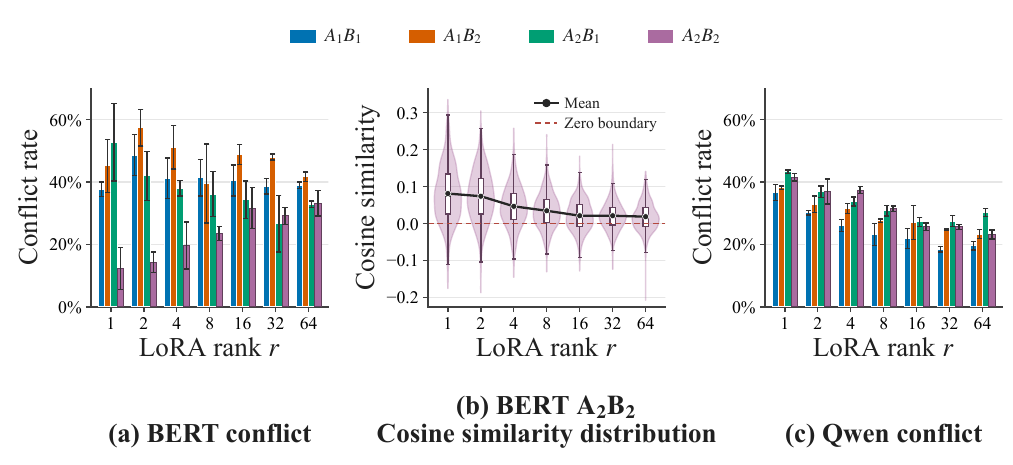}
    \caption{Gradient conflict across LoRA ranks $r$. (a, c) Training-averaged conflict rates for the four task pairs in BERT and Qwen, respectively; error bars show standard deviations across seeds. (b) Gradient cosine similarity distributions for BERT's $A_2B_2$ pair: violins show density, boxes show medians and interquartile ranges, and black circles mark means. The red dashed line marks the zero-cosine conflict boundary.}
    \label{fig:bert_qwen_gradient_comparison}
\end{figure*}

\section{Conclusion}
This paper introduces the Bell Boundary as a measurable capacity boundary of shared representations. We show that, under the adopted CHSH construction, task correlations jointly supported by a common shared representation should satisfy $|S_{\mathrm{NN}}|\leq2$. Experiments on NCnet, CNNs, PLMs, and LLMs show that $S_{\mathrm{NN}}$ can identify the capacity range where a shared representation is insufficient to support multiple task contexts. Homogeneous tasks are easier to support, while heterogeneous tasks more clearly expose the boundary before sufficient capacity is reached.

This study has several limitations. First, the CHSH statistic is meaningful only after the model exceeds chance-level prediction. Second, the construction requires task outcomes to be converted into binary correctness variables, which may lose information in tasks with richer output structures. Third, the boundary depends on task grouping, capacity control, and evaluation protocol, so its location should not be viewed as a fixed property of a model. Future work can extend this framework to more complex task structures, continuous outputs, adaptive task grouping, and practical training strategies that use the Bell Boundary to guide capacity expansion or modularization.

\appendix

\section{Significance Testing of the Bell-Type Statistic}
\label{app:sigtest}

For each hidden width $d$, we use a right-tailed one-sample $t$-test on
$S_{\mathrm{NN}}$ values from 100 independent random seeds to assess whether
their mean exceeds $2$.
Results with unadjusted $p<0.05$ are bolded in
Table~\ref{tab:hidden_dim_sweep_pairwise}.

For Settings 1--4, we apply the same right-tailed one-sample $t$-test against the bound $2$ at each capacity configuration, using five random seeds for Setting 1 and three for Settings 2--4. We use unadjusted $p<0.05$ without correction for multiple comparisons. Setting 1 shows significant exceedances at intermediate CNN capacities, whereas Setting 2 shows none at any tested LoRA rank. Setting 3 significantly exceeds $2$ at $r\in\{1,2,4,8\}$, and Setting 4 at $r\in\{2,4\}$; higher ranks show no significant exceedance.

\section{Detailed Synthetic Experimental Setup}
\label{app:synth_setup}

This appendix provides the full training details of the synthetic NCnet experiment summarized in Section~4. All results are produced by the released script \texttt{bell\_nn\_experiment.py} (driven by \texttt{run\_all\_n.py}), which we summarize below.

\textbf{Data.} The synthetic dataset is the complete set of $2^4=16$ binary inputs $X\in\{0,1\}^4$. Each input appears twice in the training set (32 training samples) and once in the evaluation set (16 samples); since the input space is fully enumerated, the training and evaluation sets coincide and we report training-set statistics. The Alice/Bob labels are $a_1=X_1$, $a_2=X_1\oplus X_2$, $b_1=X_3$, $b_2=X_3\oplus X_4$.

\textbf{Architecture.} Each two-task model is a one-hidden-layer feedforward network $X\in\mathbb{R}^4 \xrightarrow{\;fc_1\;} h\in\mathbb{R}^n \xrightarrow{\;fc_2\;} y\in\mathbb{R}^2$ with ReLU activations applied after both $fc_1$ and $fc_2$, where $n\in\{1,\ldots,20\}$ is the shared hidden width and the two output scalars are the Alice and Bob task heads. The hidden layer $fc_1$ is the only shared representation; the two heads share no parameters beyond it.

\textbf{Training.} Each model is trained for up to 20000 epochs with binary cross-entropy (averaged over both heads and all samples) under the Adam optimizer (learning rate $1\times10^{-2}$, default $\beta$ and weight decay), with early stopping once the training loss falls below $10^{-6}$. Within each run, the checkpoint of lowest training loss is retained for evaluation.

\textbf{Aggregation.} For each task pair $(a_i,b_j)$ and hidden width $n$, we train 100 independent models with random seeds $0,\ldots,99$, each seed fixing both the \texttt{torch} and \texttt{NumPy} random number generators. The consistency indicator $A_i$ ($B_j$) takes the value $+1$ when the corresponding head predicts the binary label correctly and $-1$ otherwise. For each seed $s$, the pairwise correlation $E_{ij}^{(s)}=\langle A_i B_j\rangle$ is the arithmetic mean over the 16 inputs, and the per-seed CHSH statistic is $S_{\mathrm{NN}}^{(s)}=E_{11}^{(s)}+E_{12}^{(s)}+E_{21}^{(s)}-E_{22}^{(s)}$. We report the mean and standard deviation over the 100 seeds. Figure~\ref{img:ncnet-exp} additionally reports the average training loss and the gradient cosine similarity defined below.

\textbf{Gradient competition metric.} To quantify gradient competition, we backpropagate each head's loss separately through the shared hidden layer ($fc_1$) and record the cosine similarity between the two resulting gradient vectors; values near $-1$ indicate strong conflict. This metric is sampled every 100 epochs. The lower panel of Figure~\ref{img:ncnet-exp} reports the mean over the four task-pair contexts and all training phases.

\section{Training Details for Large Foundation Models}
\label{app:training_details}

Table~\ref{tab:real_world_training_details} summarizes the optimization and adaptation hyperparameters for Settings 1--4. Each task-pair context is trained separately. We report the settings used by the corresponding training pipelines, including differences between the original PAWS-X pipeline and the later BERT and Qwen pipelines.

\begin{table*}[t]
\centering
\small
\setlength{\tabcolsep}{4pt}
\renewcommand{\arraystretch}{1.15}
\caption{Training hyperparameters for Settings 1--4. CNN uses its full training set; language sample caps are per task. A dash indicates an inapplicable parameter or one not recorded in the available training logs. The Qwen learning-rate floors are specified below.}
\label{tab:real_world_training_details}
\begin{tabularx}{\textwidth}{>{\raggedright\arraybackslash}p{0.27\textwidth}*{4}{>{\centering\arraybackslash}X}}
\toprule
\textbf{Parameter} & \makecell{\textbf{CNN}\\\textbf{Setting 1}} & \makecell{\textbf{BERT}\\\textbf{Setting 2}} & \makecell{\textbf{BERT}\\\textbf{Setting 3}} & \makecell{\textbf{Qwen3-8B}\\\textbf{Setting 4}} \\
\midrule
Optimizer & AdamW & --- & AdamW & AdamW \\
Initial learning rate & $10^{-3}$ & $3\times10^{-4}$ & $10^{-3}$ (LoRA) & $2\times10^{-4}$ \\
Classification-head learning rate & $10^{-3}$ & --- & $2.5\times10^{-3}$ & --- \\
Learning-rate decay & Cosine & Plateau & Cosine & Cosine \\
Linear warmup & 5 epochs & $5\%$ & $3\%$ & $5\%$ \\
Learning-rate floor & $0$ & $10^{-6}$ & $10^{-6}$ & \makecell{$10^{-6}$ or\\$10^{-5}$} \\
Weight decay & $10^{-4}$ & --- & $0.01$ & $0.01$ \\
Gradient clipping norm & None & --- & $1.0$ & $1.0$ \\
Batch size & $512$ & $64$ & $64$ pairs & $8$ examples \\
Gradient accumulation steps & $1$ & --- & $1$ & $3$ \\
Training epochs & $50$ & At most $100$ & $200$ & $10$ \\
Maximum sequence length & --- & --- & $192$ & $512$ \\
Training samples / cap & $4000$ & $5000$ & $3000$ & $3000$ \\
Evaluation sample cap & --- & \makecell{$625$ dev /\\$625$ test} & $1500$ & $500$ \\
LoRA scaling $\alpha$ & --- & $2r$ & $2r$ & $2r$ \\
LoRA dropout & --- & $0.05$ & $0$ & $0$ \\
LoRA rank $r$ & --- & \makecell{$1,2,4,8,$\\$16,32,64$} & \makecell{$1,2,4,8,$\\$16,32,64$} & \makecell{$1,2,4,8,$\\$16,32,64$} \\
Reported seed count & $5$ & $3$ & $3$ & $3$ \\
\bottomrule
\end{tabularx}
\end{table*}

\textbf{Optimization.} CNN, BERT Setting 3, and Qwen use PyTorch AdamW with its default moment coefficients $(\beta_1,\beta_2)=(0.9,0.999)$ and numerical-stability constant $\epsilon=10^{-8}$. Their learning rates follow a half-cycle cosine decay after linear warmup; the schedule advances after each optimizer update. The CNN training set contains 4000 images covering 100 left--right digit combinations, with 40 images per combination. CNN uses encoder dropout $0.1$ and linear task heads, with the depth and channel-width sweep specified in Section~5. BERT Setting 3 and Qwen run for the fixed epoch budgets in the table without early stopping.

\textbf{PAWS-X (Setting 2).} The original multilingual BERT pipeline uses hard parameter sharing and equally weighted task losses. LoRA targets the attention query and value projections, with $\alpha/r=2$ and dropout $0.05$. After $5\%$ warmup, the learning-rate scheduler reduces the rate by a factor of $0.5$ on a plateau, with patience $10$, relative threshold $10^{-3}$, and cooldown $3$ epochs. The early-stopping configuration has patience $10$, minimum improvement $0$, and equal task weights $(0.5,0.5)$. The training budget is at most 100 epochs, with the separate development and test caps listed in the table.

\textbf{Mixed-task BERT (Setting 3).} We use BERT-base-uncased with frozen pretrained encoder weights and train LoRA modules on the query and value projections in every attention layer, together with task-specific heads. Each head has a 128-dimensional hidden layer, ReLU activation, and zero head dropout. Classification and multiple-choice tasks use cross-entropy, and the two active task losses are summed with unit weights. A batch contains 64 paired examples, hence 64 examples from each active task. Training uses bfloat16 autocast.

\textbf{Qwen3-8B (Setting 4).} We freeze the pretrained transformer and language-model output head and train only LoRA modules on the attention query and value projections (\texttt{q\_proj} and \texttt{v\_proj}), using a bfloat16 backbone, float32 adapters, and gradient checkpointing. Each microbatch contains four examples from each active task; three microbatches are accumulated, giving an effective batch size of 24 and 250 optimizer updates per epoch (2500 in total). Supervised fine-tuning minimizes next-token cross-entropy on the answer and end-of-turn tokens, with prompt tokens masked from the loss. At the $1{:}1$ task mixture, the implemented objective is the sum of the two task-wise mean answer-token losses. The 512-token limit covers the prompt and answer together; long passages are truncated while retaining the question, options, and answer. Available saved runs use a learning-rate floor of $10^{-5}$ for $r=2$, for $A_2B_1$ at $r=4$, and for $A_2B_2$ at $r\in\{4,8,16,32\}$; the other available configurations use $10^{-6}$.

\textbf{Measurement protocol.} CNN and BERT Setting 3 statistics use the final-epoch training outcomes. Qwen correlations use a fixed probe drawn from the training pool, with 3000 examples per task, probe seed 1234, and paired probe batches of size 8, evaluated at each epoch end. Correctness on this probe is computed by parsing teacher-forced answer-position argmax predictions, and the task outcomes are paired by probe order. Separate held-out Qwen evaluation uses greedy decoding with at most 128 new tokens and exact matching of the parsed answer. Gradient competition is measured on the trainable shared parameters once per epoch for CNN, BERT Setting 3, and Qwen; Qwen averages the metrics over paired probe batches. Seed counts in the table refer to the reported result summaries.

\section{Detailed CNN Training Accuracies}
\label{app:cnn_train_acc}

Tables~\ref{tab:cnn_train_acc_d1}--\ref{tab:cnn_train_acc_d7} report final-epoch training accuracies for each task in the four CNN task-pair contexts across channel widths, with one table per shared-encoder depth. Within each context, the $A_i$ and $B_j$ columns correspond to \texttt{train\_acc\_a} and \texttt{train\_acc\_b}, respectively. Values are percentages, reported as mean $\pm$ sample standard deviation over five random seeds ($42$--$46$; standard deviation computed with denominator $5-1$).

\begin{table*}[t]
  \centering
  \small
  \caption{Task accuracy (\%) for CNN in Setting 2 at shared-encoder depth $1$. Each entry is mean $\pm$ sample standard deviation over five random seeds.}
  \label{tab:cnn_train_acc_d1}
  \setlength{\tabcolsep}{3pt}
  \renewcommand{\arraystretch}{1.15}
  \resizebox{\textwidth}{!}{%
  \begin{tabular}{ccccccccc}
    \toprule
    \multirow{2}{*}{Channel width} & \multicolumn{2}{c}{$A_1B_1$} & \multicolumn{2}{c}{$A_1B_2$} & \multicolumn{2}{c}{$A_2B_1$} & \multicolumn{2}{c}{$A_2B_2$} \\
    \cmidrule(lr){2-3}\cmidrule(lr){4-5}\cmidrule(lr){6-7}\cmidrule(lr){8-9}
      & $A_1$ & $B_1$ & $A_1$ & $B_2$ & $A_2$ & $B_1$ & $A_2$ & $B_2$ \\
    \midrule
    1 & $90.00 \pm 0.00$ & $90.00 \pm 0.00$ & $74.00 \pm 35.78$ & $10.02 \pm 0.03$ & $9.95 \pm 0.13$ & $90.00 \pm 0.00$ & $10.02 \pm 0.03$ & $10.07 \pm 0.16$ \\
    2 & $74.00 \pm 35.78$ & $72.48 \pm 35.08$ & $88.35 \pm 3.69$ & $9.52 \pm 1.05$ & $9.30 \pm 0.88$ & $90.00 \pm 0.00$ & $10.07 \pm 0.45$ & $9.62 \pm 0.71$ \\
    4 & $85.69 \pm 4.95$ & $90.00 \pm 0.00$ & $89.99 \pm 0.02$ & $10.40 \pm 0.66$ & $10.57 \pm 1.38$ & $89.27 \pm 1.63$ & $10.25 \pm 0.76$ & $10.06 \pm 0.33$ \\
    8 & $90.00 \pm 0.00$ & $90.00 \pm 0.00$ & $89.98 \pm 0.04$ & $10.70 \pm 1.13$ & $10.58 \pm 1.38$ & $90.00 \pm 0.00$ & $10.93 \pm 0.89$ & $10.49 \pm 1.18$ \\
    16 & $90.00 \pm 0.00$ & $90.00 \pm 0.00$ & $90.00 \pm 0.00$ & $12.80 \pm 1.15$ & $11.73 \pm 0.74$ & $90.00 \pm 0.00$ & $11.97 \pm 0.78$ & $13.28 \pm 1.10$ \\
    32 & $90.00 \pm 0.00$ & $90.00 \pm 0.00$ & $90.00 \pm 0.00$ & $15.39 \pm 1.24$ & $14.71 \pm 1.02$ & $90.00 \pm 0.00$ & $14.90 \pm 1.61$ & $15.32 \pm 0.59$ \\
    64 & $90.00 \pm 0.00$ & $90.00 \pm 0.00$ & $90.00 \pm 0.00$ & $16.93 \pm 0.29$ & $16.74 \pm 0.40$ & $90.00 \pm 0.00$ & $17.14 \pm 0.83$ & $18.01 \pm 0.47$ \\
    96 & $90.00 \pm 0.00$ & $90.00 \pm 0.00$ & $90.00 \pm 0.00$ & $18.13 \pm 0.24$ & $16.97 \pm 0.20$ & $90.00 \pm 0.00$ & $17.63 \pm 0.50$ & $18.67 \pm 0.46$ \\
    \bottomrule
  \end{tabular}%
  }
\end{table*}

\begin{table*}[t]
  \centering
  \small
  \caption{Task accuracy (\%) for CNN in Setting 2 at shared-encoder depth $2$. Each entry is mean $\pm$ sample standard deviation over five random seeds.}
  \label{tab:cnn_train_acc_d2}
  \setlength{\tabcolsep}{3pt}
  \renewcommand{\arraystretch}{1.15}
  \resizebox{\textwidth}{!}{%
  \begin{tabular}{ccccccccc}
    \toprule
    \multirow{2}{*}{Channel width} & \multicolumn{2}{c}{$A_1B_1$} & \multicolumn{2}{c}{$A_1B_2$} & \multicolumn{2}{c}{$A_2B_1$} & \multicolumn{2}{c}{$A_2B_2$} \\
    \cmidrule(lr){2-3}\cmidrule(lr){4-5}\cmidrule(lr){6-7}\cmidrule(lr){8-9}
      & $A_1$ & $B_1$ & $A_1$ & $B_2$ & $A_2$ & $B_1$ & $A_2$ & $B_2$ \\
    \midrule
    1 & $72.50 \pm 35.10$ & $72.36 \pm 35.04$ & $86.07 \pm 5.18$ & $10.09 \pm 0.13$ & $10.08 \pm 0.12$ & $78.11 \pm 26.59$ & $10.13 \pm 0.22$ & $9.61 \pm 0.67$ \\
    2 & $90.00 \pm 0.00$ & $90.01 \pm 0.01$ & $89.84 \pm 0.37$ & $10.13 \pm 0.16$ & $11.14 \pm 1.91$ & $89.89 \pm 0.25$ & $11.51 \pm 1.92$ & $10.78 \pm 1.12$ \\
    4 & $90.00 \pm 0.00$ & $90.00 \pm 0.00$ & $90.00 \pm 0.00$ & $11.72 \pm 0.95$ & $11.96 \pm 1.15$ & $90.00 \pm 0.00$ & $15.58 \pm 1.28$ & $13.91 \pm 1.30$ \\
    8 & $90.00 \pm 0.00$ & $90.00 \pm 0.00$ & $90.00 \pm 0.00$ & $16.12 \pm 1.69$ & $15.04 \pm 1.73$ & $90.00 \pm 0.00$ & $18.20 \pm 0.97$ & $18.60 \pm 1.83$ \\
    16 & $90.00 \pm 0.00$ & $90.00 \pm 0.00$ & $90.00 \pm 0.00$ & $21.29 \pm 0.76$ & $19.79 \pm 0.73$ & $90.00 \pm 0.00$ & $21.86 \pm 0.21$ & $22.71 \pm 0.80$ \\
    32 & $90.00 \pm 0.00$ & $90.00 \pm 0.00$ & $90.00 \pm 0.00$ & $24.59 \pm 1.13$ & $23.00 \pm 0.78$ & $90.00 \pm 0.00$ & $25.25 \pm 0.46$ & $26.79 \pm 0.61$ \\
    64 & $90.00 \pm 0.00$ & $90.00 \pm 0.00$ & $90.00 \pm 0.00$ & $28.88 \pm 0.68$ & $27.24 \pm 0.93$ & $90.00 \pm 0.00$ & $29.01 \pm 0.98$ & $30.49 \pm 1.00$ \\
    96 & $90.00 \pm 0.00$ & $90.04 \pm 0.05$ & $90.00 \pm 0.00$ & $31.49 \pm 0.74$ & $29.97 \pm 0.65$ & $90.00 \pm 0.00$ & $30.92 \pm 0.89$ & $32.60 \pm 0.79$ \\
    \bottomrule
  \end{tabular}%
  }
\end{table*}

\begin{table*}[t]
  \centering
  \small
  \caption{Task accuracy (\%) for CNN in Setting 2 at shared-encoder depth $3$. Each entry is mean $\pm$ sample standard deviation over five random seeds.}
  \label{tab:cnn_train_acc_d3}
  \setlength{\tabcolsep}{3pt}
  \renewcommand{\arraystretch}{1.15}
  \resizebox{\textwidth}{!}{%
  \begin{tabular}{ccccccccc}
    \toprule
    \multirow{2}{*}{Channel width} & \multicolumn{2}{c}{$A_1B_1$} & \multicolumn{2}{c}{$A_1B_2$} & \multicolumn{2}{c}{$A_2B_1$} & \multicolumn{2}{c}{$A_2B_2$} \\
    \cmidrule(lr){2-3}\cmidrule(lr){4-5}\cmidrule(lr){6-7}\cmidrule(lr){8-9}
      & $A_1$ & $B_1$ & $A_1$ & $B_2$ & $A_2$ & $B_1$ & $A_2$ & $B_2$ \\
    \midrule
    1 & $90.00 \pm 0.00$ & $89.99 \pm 0.02$ & $89.83 \pm 0.38$ & $10.79 \pm 1.01$ & $11.69 \pm 2.35$ & $90.00 \pm 0.00$ & $12.58 \pm 0.76$ & $11.19 \pm 1.65$ \\
    2 & $90.00 \pm 0.00$ & $90.00 \pm 0.00$ & $89.97 \pm 0.08$ & $13.74 \pm 2.20$ & $14.16 \pm 1.90$ & $89.99 \pm 0.03$ & $15.01 \pm 1.37$ & $15.09 \pm 1.82$ \\
    4 & $90.00 \pm 0.00$ & $90.00 \pm 0.00$ & $90.00 \pm 0.00$ & $18.30 \pm 2.47$ & $18.75 \pm 1.74$ & $90.00 \pm 0.00$ & $20.82 \pm 1.58$ & $21.16 \pm 1.86$ \\
    8 & $90.00 \pm 0.00$ & $90.00 \pm 0.00$ & $90.00 \pm 0.00$ & $27.00 \pm 2.37$ & $26.42 \pm 1.72$ & $90.00 \pm 0.00$ & $30.51 \pm 1.87$ & $31.81 \pm 2.55$ \\
    16 & $90.01 \pm 0.02$ & $90.00 \pm 0.00$ & $90.00 \pm 0.00$ & $41.44 \pm 3.01$ & $36.53 \pm 1.58$ & $90.00 \pm 0.00$ & $39.90 \pm 1.00$ & $42.64 \pm 0.74$ \\
    32 & $91.51 \pm 0.53$ & $90.88 \pm 0.15$ & $90.92 \pm 0.11$ & $52.97 \pm 1.17$ & $49.77 \pm 1.40$ & $90.90 \pm 0.04$ & $49.04 \pm 1.09$ & $51.01 \pm 1.09$ \\
    64 & $94.14 \pm 0.76$ & $92.70 \pm 0.36$ & $91.00 \pm 0.15$ & $64.26 \pm 1.61$ & $63.29 \pm 2.29$ & $91.01 \pm 0.06$ & $53.07 \pm 0.77$ & $55.66 \pm 0.75$ \\
    96 & $97.01 \pm 0.92$ & $95.06 \pm 0.85$ & $90.81 \pm 0.21$ & $73.45 \pm 1.96$ & $71.33 \pm 1.78$ & $90.83 \pm 0.16$ & $54.53 \pm 0.63$ & $56.86 \pm 0.62$ \\
    \bottomrule
  \end{tabular}%
  }
\end{table*}

\begin{table*}[t]
  \centering
  \small
  \caption{Task accuracy (\%) for CNN in Setting 2 at shared-encoder depth $4$. Each entry is mean $\pm$ sample standard deviation over five random seeds.}
  \label{tab:cnn_train_acc_d4}
  \setlength{\tabcolsep}{3pt}
  \renewcommand{\arraystretch}{1.15}
  \resizebox{\textwidth}{!}{%
  \begin{tabular}{ccccccccc}
    \toprule
    \multirow{2}{*}{Channel width} & \multicolumn{2}{c}{$A_1B_1$} & \multicolumn{2}{c}{$A_1B_2$} & \multicolumn{2}{c}{$A_2B_1$} & \multicolumn{2}{c}{$A_2B_2$} \\
    \cmidrule(lr){2-3}\cmidrule(lr){4-5}\cmidrule(lr){6-7}\cmidrule(lr){8-9}
      & $A_1$ & $B_1$ & $A_1$ & $B_2$ & $A_2$ & $B_1$ & $A_2$ & $B_2$ \\
    \midrule
    1 & $90.00 \pm 0.00$ & $89.99 \pm 0.03$ & $89.99 \pm 0.02$ & $15.66 \pm 2.33$ & $15.94 \pm 2.15$ & $90.00 \pm 0.00$ & $16.49 \pm 1.53$ & $17.09 \pm 1.76$ \\
    2 & $90.00 \pm 0.00$ & $90.00 \pm 0.00$ & $90.00 \pm 0.00$ & $24.17 \pm 3.26$ & $21.59 \pm 2.20$ & $90.00 \pm 0.00$ & $28.97 \pm 2.86$ & $30.03 \pm 3.03$ \\
    4 & $90.06 \pm 0.06$ & $90.46 \pm 0.48$ & $90.14 \pm 0.27$ & $40.38 \pm 2.72$ & $39.33 \pm 2.27$ & $90.44 \pm 0.34$ & $41.51 \pm 2.27$ & $44.13 \pm 3.24$ \\
    8 & $92.26 \pm 0.63$ & $91.53 \pm 0.16$ & $90.95 \pm 0.14$ & $57.24 \pm 1.92$ & $55.01 \pm 1.48$ & $90.96 \pm 0.08$ & $51.32 \pm 0.95$ & $54.04 \pm 0.29$ \\
    16 & $98.34 \pm 0.51$ & $96.32 \pm 0.67$ & $90.32 \pm 0.18$ & $78.80 \pm 2.13$ & $75.95 \pm 2.04$ & $90.50 \pm 0.10$ & $54.10 \pm 0.26$ & $57.36 \pm 0.26$ \\
    32 & $100.00 \pm 0.00$ & $99.98 \pm 0.02$ & $92.31 \pm 1.31$ & $99.28 \pm 0.34$ & $98.79 \pm 0.65$ & $91.08 \pm 0.67$ & $61.91 \pm 0.60$ & $64.54 \pm 1.00$ \\
    64 & $100.00 \pm 0.00$ & $100.00 \pm 0.00$ & $99.99 \pm 0.02$ & $100.00 \pm 0.00$ & $100.00 \pm 0.00$ & $99.76 \pm 0.39$ & $98.33 \pm 0.55$ & $98.11 \pm 0.47$ \\
    96 & $100.00 \pm 0.00$ & $100.00 \pm 0.00$ & $100.00 \pm 0.00$ & $100.00 \pm 0.00$ & $100.00 \pm 0.00$ & $100.00 \pm 0.00$ & $100.00 \pm 0.00$ & $100.00 \pm 0.00$ \\
    \bottomrule
  \end{tabular}%
  }
\end{table*}

\begin{table*}[t]
  \centering
  \small
  \caption{Task accuracy (\%) for CNN in Setting 2 at shared-encoder depth $5$. Each entry is mean $\pm$ sample standard deviation over five random seeds.}
  \label{tab:cnn_train_acc_d5}
  \setlength{\tabcolsep}{3pt}
  \renewcommand{\arraystretch}{1.15}
  \resizebox{\textwidth}{!}{%
  \begin{tabular}{ccccccccc}
    \toprule
    \multirow{2}{*}{Channel width} & \multicolumn{2}{c}{$A_1B_1$} & \multicolumn{2}{c}{$A_1B_2$} & \multicolumn{2}{c}{$A_2B_1$} & \multicolumn{2}{c}{$A_2B_2$} \\
    \cmidrule(lr){2-3}\cmidrule(lr){4-5}\cmidrule(lr){6-7}\cmidrule(lr){8-9}
      & $A_1$ & $B_1$ & $A_1$ & $B_2$ & $A_2$ & $B_1$ & $A_2$ & $B_2$ \\
    \midrule
    1 & $90.00 \pm 0.00$ & $90.33 \pm 0.45$ & $90.01 \pm 0.02$ & $30.58 \pm 1.77$ & $28.45 \pm 2.12$ & $90.13 \pm 0.25$ & $27.40 \pm 3.25$ & $28.66 \pm 2.70$ \\
    2 & $92.48 \pm 0.80$ & $93.91 \pm 1.05$ & $90.37 \pm 0.33$ & $58.91 \pm 2.09$ & $54.17 \pm 3.05$ & $90.56 \pm 0.30$ & $45.57 \pm 1.39$ & $48.83 \pm 1.14$ \\
    4 & $98.00 \pm 0.57$ & $98.71 \pm 0.76$ & $90.81 \pm 0.24$ & $88.21 \pm 0.80$ & $87.47 \pm 1.42$ & $91.67 \pm 0.66$ & $59.50 \pm 1.97$ & $63.40 \pm 1.48$ \\
    8 & $100.00 \pm 0.01$ & $99.98 \pm 0.02$ & $98.28 \pm 0.37$ & $99.75 \pm 0.10$ & $99.45 \pm 0.05$ & $98.24 \pm 0.51$ & $93.53 \pm 0.89$ & $93.47 \pm 0.33$ \\
    16 & $100.00 \pm 0.00$ & $100.00 \pm 0.00$ & $100.00 \pm 0.00$ & $100.00 \pm 0.00$ & $100.00 \pm 0.00$ & $100.00 \pm 0.00$ & $100.00 \pm 0.01$ & $100.00 \pm 0.00$ \\
    32 & $100.00 \pm 0.00$ & $100.00 \pm 0.00$ & $100.00 \pm 0.00$ & $100.00 \pm 0.00$ & $100.00 \pm 0.00$ & $100.00 \pm 0.00$ & $100.00 \pm 0.00$ & $100.00 \pm 0.00$ \\
    64 & $100.00 \pm 0.00$ & $100.00 \pm 0.00$ & $100.00 \pm 0.00$ & $100.00 \pm 0.00$ & $100.00 \pm 0.00$ & $100.00 \pm 0.00$ & $100.00 \pm 0.00$ & $100.00 \pm 0.00$ \\
    96 & $100.00 \pm 0.00$ & $100.00 \pm 0.00$ & $100.00 \pm 0.00$ & $100.00 \pm 0.00$ & $100.00 \pm 0.00$ & $100.00 \pm 0.00$ & $100.00 \pm 0.00$ & $100.00 \pm 0.00$ \\
    \bottomrule
  \end{tabular}%
  }
\end{table*}

\begin{table*}[t]
  \centering
  \small
  \caption{Task accuracy (\%) for CNN in Setting 2 at shared-encoder depth $6$. Each entry is mean $\pm$ sample standard deviation over five random seeds.}
  \label{tab:cnn_train_acc_d6}
  \setlength{\tabcolsep}{3pt}
  \renewcommand{\arraystretch}{1.15}
  \resizebox{\textwidth}{!}{%
  \begin{tabular}{ccccccccc}
    \toprule
    \multirow{2}{*}{Channel width} & \multicolumn{2}{c}{$A_1B_1$} & \multicolumn{2}{c}{$A_1B_2$} & \multicolumn{2}{c}{$A_2B_1$} & \multicolumn{2}{c}{$A_2B_2$} \\
    \cmidrule(lr){2-3}\cmidrule(lr){4-5}\cmidrule(lr){6-7}\cmidrule(lr){8-9}
      & $A_1$ & $B_1$ & $A_1$ & $B_2$ & $A_2$ & $B_1$ & $A_2$ & $B_2$ \\
    \midrule
    1 & $95.73 \pm 0.92$ & $98.33 \pm 0.63$ & $90.42 \pm 0.63$ & $51.35 \pm 9.85$ & $59.09 \pm 4.89$ & $94.13 \pm 1.14$ & $53.93 \pm 1.51$ & $54.85 \pm 3.22$ \\
    2 & $99.85 \pm 0.09$ & $99.81 \pm 0.10$ & $96.75 \pm 1.09$ & $91.22 \pm 2.23$ & $90.58 \pm 1.47$ & $98.04 \pm 1.07$ & $86.73 \pm 2.39$ & $87.66 \pm 2.47$ \\
    4 & $100.00 \pm 0.00$ & $100.00 \pm 0.00$ & $99.95 \pm 0.04$ & $100.00 \pm 0.00$ & $100.00 \pm 0.00$ & $99.89 \pm 0.06$ & $99.75 \pm 0.11$ & $99.77 \pm 0.18$ \\
    8 & $100.00 \pm 0.00$ & $100.00 \pm 0.00$ & $100.00 \pm 0.00$ & $100.00 \pm 0.00$ & $100.00 \pm 0.00$ & $100.00 \pm 0.00$ & $100.00 \pm 0.00$ & $100.00 \pm 0.00$ \\
    16 & $100.00 \pm 0.00$ & $100.00 \pm 0.00$ & $100.00 \pm 0.00$ & $100.00 \pm 0.00$ & $100.00 \pm 0.00$ & $100.00 \pm 0.00$ & $100.00 \pm 0.00$ & $100.00 \pm 0.00$ \\
    32 & $100.00 \pm 0.00$ & $100.00 \pm 0.00$ & $100.00 \pm 0.00$ & $100.00 \pm 0.00$ & $100.00 \pm 0.00$ & $100.00 \pm 0.00$ & $100.00 \pm 0.00$ & $100.00 \pm 0.00$ \\
    64 & $100.00 \pm 0.00$ & $100.00 \pm 0.00$ & $100.00 \pm 0.00$ & $100.00 \pm 0.00$ & $100.00 \pm 0.00$ & $100.00 \pm 0.00$ & $100.00 \pm 0.00$ & $100.00 \pm 0.00$ \\
    96 & $100.00 \pm 0.00$ & $100.00 \pm 0.00$ & $100.00 \pm 0.00$ & $100.00 \pm 0.00$ & $100.00 \pm 0.00$ & $100.00 \pm 0.00$ & $100.00 \pm 0.00$ & $100.00 \pm 0.00$ \\
    \bottomrule
  \end{tabular}%
  }
\end{table*}

\begin{table*}[t]
  \centering
  \small
  \caption{Task accuracy (\%) for CNN in Setting 2 at shared-encoder depth $7$. Each entry is mean $\pm$ sample standard deviation over five random seeds.}
  \label{tab:cnn_train_acc_d7}
  \setlength{\tabcolsep}{3pt}
  \renewcommand{\arraystretch}{1.15}
  \resizebox{\textwidth}{!}{%
  \begin{tabular}{ccccccccc}
    \toprule
    \multirow{2}{*}{Channel width} & \multicolumn{2}{c}{$A_1B_1$} & \multicolumn{2}{c}{$A_1B_2$} & \multicolumn{2}{c}{$A_2B_1$} & \multicolumn{2}{c}{$A_2B_2$} \\
    \cmidrule(lr){2-3}\cmidrule(lr){4-5}\cmidrule(lr){6-7}\cmidrule(lr){8-9}
      & $A_1$ & $B_1$ & $A_1$ & $B_2$ & $A_2$ & $B_1$ & $A_2$ & $B_2$ \\
    \midrule
    1 & $99.17 \pm 0.31$ & $99.68 \pm 0.11$ & $93.61 \pm 0.41$ & $76.49 \pm 4.31$ & $76.18 \pm 2.57$ & $98.76 \pm 0.46$ & $69.54 \pm 4.68$ & $71.84 \pm 2.87$ \\
    2 & $100.00 \pm 0.00$ & $100.00 \pm 0.00$ & $99.92 \pm 0.12$ & $99.95 \pm 0.07$ & $99.92 \pm 0.03$ & $99.83 \pm 0.08$ & $99.24 \pm 0.36$ & $99.24 \pm 0.40$ \\
    4 & $100.00 \pm 0.00$ & $100.00 \pm 0.00$ & $100.00 \pm 0.00$ & $100.00 \pm 0.00$ & $100.00 \pm 0.00$ & $100.00 \pm 0.00$ & $100.00 \pm 0.00$ & $100.00 \pm 0.00$ \\
    8 & $100.00 \pm 0.00$ & $100.00 \pm 0.00$ & $100.00 \pm 0.00$ & $100.00 \pm 0.00$ & $100.00 \pm 0.00$ & $100.00 \pm 0.00$ & $100.00 \pm 0.00$ & $100.00 \pm 0.00$ \\
    16 & $100.00 \pm 0.00$ & $100.00 \pm 0.00$ & $100.00 \pm 0.00$ & $100.00 \pm 0.00$ & $100.00 \pm 0.00$ & $100.00 \pm 0.00$ & $100.00 \pm 0.00$ & $100.00 \pm 0.00$ \\
    32 & $100.00 \pm 0.00$ & $100.00 \pm 0.00$ & $100.00 \pm 0.00$ & $100.00 \pm 0.00$ & $100.00 \pm 0.00$ & $100.00 \pm 0.00$ & $100.00 \pm 0.00$ & $100.00 \pm 0.00$ \\
    64 & $100.00 \pm 0.00$ & $100.00 \pm 0.00$ & $100.00 \pm 0.00$ & $100.00 \pm 0.00$ & $100.00 \pm 0.00$ & $100.00 \pm 0.00$ & $100.00 \pm 0.00$ & $100.00 \pm 0.00$ \\
    96 & $100.00 \pm 0.00$ & $100.00 \pm 0.00$ & $100.00 \pm 0.00$ & $100.00 \pm 0.00$ & $100.00 \pm 0.00$ & $100.00 \pm 0.00$ & $100.00 \pm 0.00$ & $100.00 \pm 0.00$ \\
    \bottomrule
  \end{tabular}%
  }
\end{table*}

\clearpage

\onecolumn

\section{Detailed Language Task Accuracies}
\label{app:language_task_acc}

Tables~\ref{tab:setting2_task_acc}--\ref{tab:setting4_task_acc} report the accuracy of each task separately in the four task-pair contexts across LoRA ranks $r$. Each context has two columns for its active tasks $A_i$ and $B_j$; the task assignments follow Table~\ref{tab:task_combinations}. Values are percentages, reported as mean $\pm$ sample standard deviation over three random seeds (denominator $3-1$), rounded to two decimal places. The seeds are $1,2,3$ for Setting 2, $42,43,44$ for Setting 3, and $42,123,2026$ for Setting 4. Setting 3 uses the final-epoch training accuracies corresponding to the reported pairwise correlations.

\begin{table}[!htbp]
  \centering
  \small
  \caption{Task accuracy (\%) for BERT in Setting 2: $A_1$ = PAWS-X-en, $A_2$ = PAWS-X-fr, $B_1$ = PAWS-X-ja, and $B_2$ = PAWS-X-ko. Each entry is mean $\pm$ sample standard deviation over three random seeds.}
  \label{tab:setting2_task_acc}
  \setlength{\tabcolsep}{3pt}
  \renewcommand{\arraystretch}{1.15}
  \resizebox{\textwidth}{!}{%
  \begin{tabular}{ccccccccc}
    \toprule
    \multirow{2}{*}{LoRA rank $r$} & \multicolumn{2}{c}{$A_1B_1$} & \multicolumn{2}{c}{$A_1B_2$} & \multicolumn{2}{c}{$A_2B_1$} & \multicolumn{2}{c}{$A_2B_2$} \\
    \cmidrule(lr){2-3}\cmidrule(lr){4-5}\cmidrule(lr){6-7}\cmidrule(lr){8-9}
      & $A_1$ & $B_1$ & $A_1$ & $B_2$ & $A_2$ & $B_1$ & $A_2$ & $B_2$ \\
    \midrule
    1 & $95.32 \pm 0.14$ & $90.85 \pm 0.85$ & $95.52 \pm 0.18$ & $89.15 \pm 0.96$ & $93.44 \pm 0.21$ & $91.50 \pm 0.40$ & $94.29 \pm 0.43$ & $89.98 \pm 0.92$ \\
    2 & $97.24 \pm 0.37$ & $94.43 \pm 0.45$ & $97.42 \pm 0.14$ & $94.46 \pm 0.05$ & $96.22 \pm 0.08$ & $95.24 \pm 0.60$ & $96.25 \pm 0.08$ & $93.86 \pm 0.27$ \\
    4 & $98.80 \pm 0.11$ & $97.88 \pm 0.24$ & $98.74 \pm 0.06$ & $97.36 \pm 0.19$ & $98.01 \pm 0.22$ & $97.52 \pm 0.20$ & $98.05 \pm 0.11$ & $96.89 \pm 0.34$ \\
    8 & $99.51 \pm 0.03$ & $98.87 \pm 0.08$ & $99.27 \pm 0.05$ & $98.39 \pm 0.01$ & $98.92 \pm 0.06$ & $98.83 \pm 0.07$ & $98.99 \pm 0.11$ & $98.54 \pm 0.03$ \\
    16 & $99.52 \pm 0.04$ & $98.88 \pm 0.08$ & $99.38 \pm 0.07$ & $98.73 \pm 0.05$ & $98.98 \pm 0.04$ & $98.89 \pm 0.05$ & $98.93 \pm 0.02$ & $98.53 \pm 0.10$ \\
    32 & $99.92 \pm 0.02$ & $99.72 \pm 0.05$ & $99.97 \pm 0.02$ & $99.66 \pm 0.03$ & $99.74 \pm 0.08$ & $99.61 \pm 0.01$ & $99.63 \pm 0.04$ & $99.72 \pm 0.03$ \\
    64 & $99.97 \pm 0.01$ & $99.86 \pm 0.02$ & $99.99 \pm 0.01$ & $99.83 \pm 0.01$ & $99.87 \pm 0.04$ & $99.81 \pm 0.01$ & $99.82 \pm 0.02$ & $99.87 \pm 0.01$ \\
    \bottomrule
  \end{tabular}%
  }
\end{table}

\begin{table}[!htbp]
  \centering
  \small
  \caption{Final-epoch training accuracy (\%) for BERT in Setting 3: $A_1$ = SST-2, $A_2$ = CommonsenseQA, $B_1$ = MRPC, and $B_2$ = MathQA. Each entry is mean $\pm$ sample standard deviation over three random seeds.}
  \label{tab:setting3_task_acc}
  \setlength{\tabcolsep}{3pt}
  \renewcommand{\arraystretch}{1.15}
  \resizebox{\textwidth}{!}{%
  \begin{tabular}{ccccccccc}
    \toprule
    \multirow{2}{*}{LoRA rank $r$} & \multicolumn{2}{c}{$A_1B_1$} & \multicolumn{2}{c}{$A_1B_2$} & \multicolumn{2}{c}{$A_2B_1$} & \multicolumn{2}{c}{$A_2B_2$} \\
    \cmidrule(lr){2-3}\cmidrule(lr){4-5}\cmidrule(lr){6-7}\cmidrule(lr){8-9}
      & $A_1$ & $B_1$ & $A_1$ & $B_2$ & $A_2$ & $B_1$ & $A_2$ & $B_2$ \\
    \midrule
    1 & $99.82 \pm 0.08$ & $99.81 \pm 0.15$ & $99.41 \pm 0.10$ & $63.37 \pm 2.46$ & $90.26 \pm 0.64$ & $99.19 \pm 0.14$ & $90.79 \pm 0.63$ & $45.68 \pm 0.81$ \\
    2 & $99.96 \pm 0.02$ & $99.96 \pm 0.04$ & $99.66 \pm 0.08$ & $82.93 \pm 2.46$ & $97.41 \pm 0.49$ & $99.80 \pm 0.09$ & $95.63 \pm 0.51$ & $70.18 \pm 2.14$ \\
    4 & $99.99 \pm 0.02$ & $99.98 \pm 0.04$ & $99.83 \pm 0.10$ & $95.57 \pm 0.29$ & $99.22 \pm 0.18$ & $99.93 \pm 0.03$ & $97.70 \pm 0.38$ & $91.53 \pm 0.59$ \\
    8 & $99.99 \pm 0.02$ & $99.98 \pm 0.03$ & $99.90 \pm 0.00$ & $98.76 \pm 0.10$ & $99.68 \pm 0.08$ & $99.98 \pm 0.02$ & $99.30 \pm 0.22$ & $97.89 \pm 0.22$ \\
    16 & $99.99 \pm 0.02$ & $99.99 \pm 0.02$ & $99.99 \pm 0.02$ & $99.46 \pm 0.13$ & $99.68 \pm 0.14$ & $99.98 \pm 0.02$ & $99.61 \pm 0.17$ & $99.44 \pm 0.05$ \\
    32 & $99.99 \pm 0.02$ & $99.99 \pm 0.01$ & $99.96 \pm 0.08$ & $98.94 \pm 0.67$ & $99.81 \pm 0.02$ & $99.98 \pm 0.04$ & $99.82 \pm 0.11$ & $99.48 \pm 0.25$ \\
    64 & $99.99 \pm 0.02$ & $100.00 \pm 0.00$ & $99.98 \pm 0.04$ & $99.60 \pm 0.03$ & $99.74 \pm 0.10$ & $99.96 \pm 0.02$ & $99.79 \pm 0.11$ & $99.67 \pm 0.12$ \\
    \bottomrule
  \end{tabular}%
  }
\end{table}

\begin{table}[!htbp]
  \centering
  \small
  \caption{Task accuracy (\%) for Qwen3-8B in Setting 4: $A_1$ = MathQA, $A_2$ = MATH, $B_1$ = RACE-Middle, and $B_2$ = LogiQA2.0. Each entry is mean $\pm$ sample standard deviation over three random seeds.}
  \label{tab:setting4_task_acc}
  \setlength{\tabcolsep}{3pt}
  \renewcommand{\arraystretch}{1.15}
  \resizebox{\textwidth}{!}{%
  \begin{tabular}{ccccccccc}
    \toprule
    \multirow{2}{*}{LoRA rank $r$} & \multicolumn{2}{c}{$A_1B_1$} & \multicolumn{2}{c}{$A_1B_2$} & \multicolumn{2}{c}{$A_2B_1$} & \multicolumn{2}{c}{$A_2B_2$} \\
    \cmidrule(lr){2-3}\cmidrule(lr){4-5}\cmidrule(lr){6-7}\cmidrule(lr){8-9}
      & $A_1$ & $B_1$ & $A_1$ & $B_2$ & $A_2$ & $B_1$ & $A_2$ & $B_2$ \\
    \midrule
    1 & $98.27 \pm 0.09$ & $98.89 \pm 0.12$ & $97.48 \pm 0.17$ & $97.58 \pm 0.08$ & $67.79 \pm 0.45$ & $99.00 \pm 0.06$ & $61.76 \pm 0.33$ & $99.07 \pm 0.03$ \\
    2 & $99.92 \pm 0.02$ & $99.52 \pm 0.12$ & $99.78 \pm 0.02$ & $99.02 \pm 0.14$ & $83.39 \pm 0.45$ & $99.79 \pm 0.05$ & $79.22 \pm 0.29$ & $99.71 \pm 0.07$ \\
    4 & $99.90 \pm 0.03$ & $99.63 \pm 0.10$ & $99.94 \pm 0.02$ & $99.48 \pm 0.13$ & $91.57 \pm 0.28$ & $99.92 \pm 0.05$ & $87.84 \pm 0.27$ & $99.97 \pm 0.03$ \\
    8 & $99.98 \pm 0.02$ & $99.91 \pm 0.07$ & $99.98 \pm 0.02$ & $99.74 \pm 0.07$ & $96.52 \pm 0.25$ & $99.88 \pm 0.02$ & $95.72 \pm 0.25$ & $99.87 \pm 0.00$ \\
    16 & $100.00 \pm 0.00$ & $99.99 \pm 0.02$ & $99.99 \pm 0.02$ & $99.86 \pm 0.05$ & $99.10 \pm 0.17$ & $99.97 \pm 0.00$ & $99.22 \pm 0.07$ & $99.96 \pm 0.02$ \\
    32 & $100.00 \pm 0.00$ & $99.99 \pm 0.02$ & $100.00 \pm 0.00$ & $99.99 \pm 0.02$ & $99.94 \pm 0.05$ & $100.00 \pm 0.00$ & $99.93 \pm 0.03$ & $99.98 \pm 0.02$ \\
    64 & $100.00 \pm 0.00$ & $99.99 \pm 0.02$ & $100.00 \pm 0.00$ & $99.98 \pm 0.02$ & $99.99 \pm 0.02$ & $100.00 \pm 0.00$ & $99.97 \pm 0.03$ & $99.99 \pm 0.02$ \\
    \bottomrule
  \end{tabular}%
  }
\end{table}

\twocolumn

\bibliographystyle{named}
\bibliography{ijcai26}

\end{document}